\newtheorem{theorem}{Theorem}
\newtheorem{definition}{Definition}
\newtheorem{remark}{Remark}
\journal{Neural Network}
\begin{document}

\begin{frontmatter}

\title{Contrastive Desensitization Learning for Cross Domain Face Forgery Detection}

\renewcommand{\thefootnote}{\fnsymbol{footnote}} 

\author[first]{Lingyu Qiu\footnotemark[1]$^,$}
\author[first]{Ke Jiang\footnotemark[1]$^,$}
\author[first]{Xiaoyang Tan}
\address[first]{College of Computer Science and Technology, Nanjing University of Aeronautics and Astronautics \& MIIT Key Laboratory of Pattern Analysis and Machine Intelligence, China}
\begin{abstract}
     In this paper, we propose a new cross-domain face forgery detection method that is insensitive to different and possibly unseen forgery methods while ensuring an acceptable low false positive rate. Although existing face forgery detection methods are applicable to multiple domains to some degree, they often come with a high false positive rate, which can greatly disrupt the usability of the system. To address this issue, we propose an Contrastive Desensitization Network (CDN) based on a robust desensitization algorithm, which captures the essential domain characteristics through learning them from domain transformation over pairs of genuine face images. One advantage of CDN lies in that the learnt face representation is theoretical justified with regard to the its robustness against the domain changes. Extensive experiments over large-scale benchmark datasets demonstrate that our method achieves a much lower false alarm rate with improved detection accuracy compared to several state-of-the-art methods.
\end{abstract}



\begin{keyword}
Face forgery detection \sep Deepfake \sep Contrastive Learning \sep Domain Generalization

\end{keyword}

\end{frontmatter}

\setcounter{secnumdepth}{0} 

\definecolor{darkgreen}{RGB}{0, 128, 0}

\renewcommand{\thefootnote}{\fnsymbol{footnote}} 
\footnotetext[1]{Equal contribution}

\section{Introduction}


The application of face recognition has recently achieved great success with the development of deep learning techniques. However, existing face recognition systems are vulnerable when facing face forgery attacks, where it is possible to generate fake faces through complex manipulation of face images. Therefore, it is essential to develop anti-face forgery methods, aiming to distinguish real face images from those manipulated by forgery techniques. These methods are also critical to defend against fake news, defame celebrities and break authentication, which can bring about serious damages to the political, social, and security areas~\cite{lyu2020deepfake}.

Current face manipulation methods can be roughly classified into four categories~\cite{tolosana2020deepfakes,peng2024local}: entire face synthesis, identity swapping, attribute manipulation, and expression swapping. The identity swapping, which is also known as Deepfakes~\cite{Synthesizing1,Synthesizing2,Synthesizing3}, is arguably one of the most harmful face forgery methods among them, and has attracted widespread attention~\cite{sun2021domain}. Traditional face forgery detection methods~\cite{DFDC_Frequency1,DFDC_Frequency2,twostream,ke2023df} train the detectors in a supervised way to capture the specific patterns in those manipulated images. However, no one knows the number of face forgery methods that will emerge in the future, making it crucial to explore how to achieve insensitivity to different and possibly unseen forgery methods. This highlights the importance of cross-domain face forgery detection, where 'domain' usually refers to different distribution that generates the face images of interest. As the forgery samples could be constructed in a heterogeneous manner, the mismatch between different domains (i.e., {\it domain shift}) is almost inevitable, and this may bring about great challenges to traditional detection methods in which rational decisions on the target domain can only be possible under the condition that enough training samples from the same domains are available.

\begin{figure}
    \centering
    \setlength{\abovecaptionskip}{0.cm}
    \setlength{\belowcaptionskip}{-0.5cm}
    \includegraphics[width=1\linewidth]{./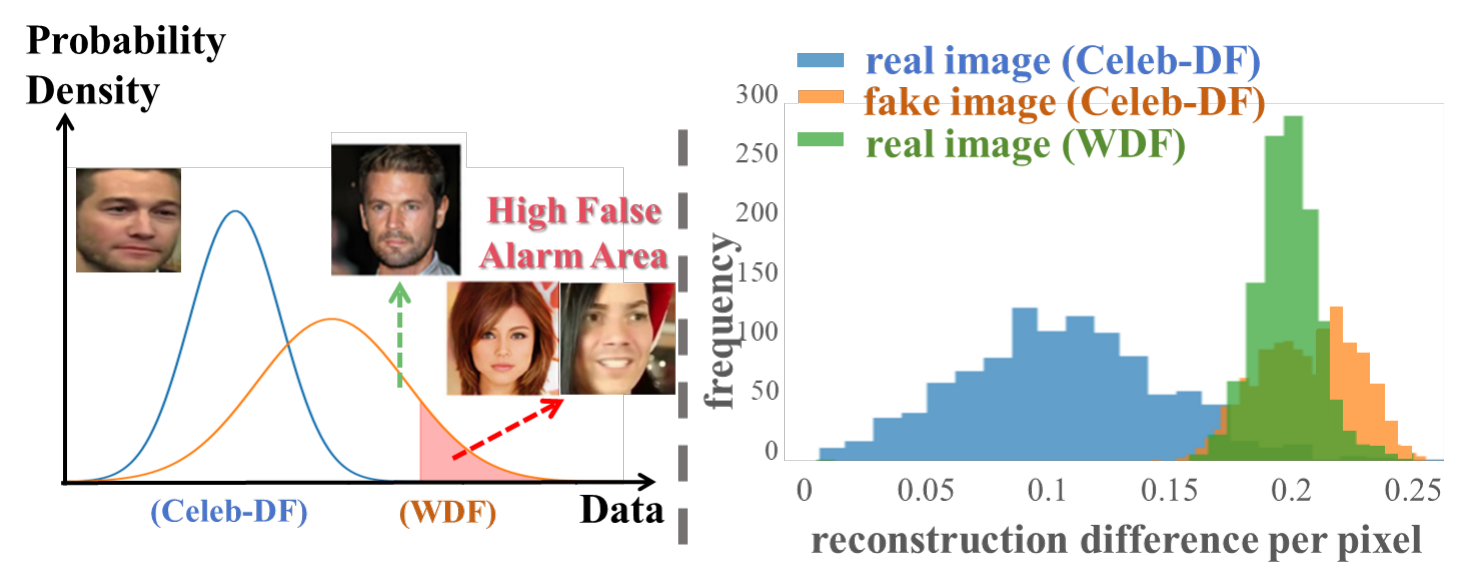}
    \caption{The diagram (left) of the {\it domain shift} problem, shows that the divergence between the source and target data distribution would potentially lead to a high false alarm rate. We also perform reconstruction (right) over cross-domain samples, and observe that the distribution of \textbf{real} face images reconstructed from the target dataset (WildDeepfake) differs significantly from those from the source domain (Celeb-DF) while having large overlapping with that of the fake face images of the same source domain (Celeb-DF).}
    \label{fig:introduct}
\end{figure}

In order to address this issue, a natural method is to treat the cross-domain face forgery detection problem as a domain adaptation problem ~\cite{chen2012marginalized}. For example, in ~\cite{RECCE,SBI,self-supervised1,shi2023real,super-resolution}, a two-stage strategy is adopted in which a generative model of real face images is first learned without using any fake images and then a face image is treated as fake only if it appears to be an outlier with regard to the learned manifold of real faces. As this representation learning stage does not involve any fake images, it is essentially insensitive to various forgery domains. 
However, an easily overlooked issue of this type of method is that they tend to misclassify genuine images undergone domain shift, leading to a significant false alarm region (see Figure \ref{fig:introduct} for an illustration, all the images depicted are real images). It is well-known that false alarms can greatly disrupt the overall system usability, making it crucial to investigate the problem of how to achieve insensitivity to different forgery methods while ensuring an acceptable low false positive rate. 

To this end, in this paper we proposed a novel deep learning method, termed \textbf{C}onstrastive \textbf{D}esensitization \textbf{N}etwork (CDN). The  primary goal of CDN is to learn a {\it general} representation for real faces across multiple domains, so as to facilitate a low false alarm rate for real ones while maintaining a high detection rate for forged face images. For this purpose, our key idea is to construct a desensitization network that effectively captures the intrinsic characteristics of real faces shared by multiple domains while removing those domain-dependent style features. We implement this by first mixing the low-level visual features from different domains, as they are known to be more shareable than the high-level semantic features~\cite{2020GMFAD}. The desensitization network is then rewarded for achieving high reconstruction fidelity using the learned representations, even when faced with such distortions.

\textcolor{black}{
In summary, the core contributions of this work are threefold:
\begin{enumerate}
\item We propose a novel \textbf{contrastive desensitization learning} framework to learn domain-invariant representations for cross-domain deepfake detection (DFDC) tasks, effectively addressing feature entanglement caused by domain shifts.
\item We establish a \textbf{theoretically rigorous framework} grounded in variational inference, which formally guarantees the disentanglement of domain-specific and intrinsic features.
\item Through comprehensive experiments on multiple benchmarks (e.g., FaceForensics++, Celeb-DF), we demonstrate state-of-the-art performance in cross-domain scenarios, accompanied by systematic ablation studies and visual interpretability analyses.
\end{enumerate}
}

\section{Related Work}
In this section, we briefly review some previous works that are closely related to the current work.
\subsection{Supervised Face Forgery Detection} 
The rapid spread of face forgery technology has brought about an urgent requirement to develop forgery detectors. Many early methods are based on the detection of specific forgery patterns such as local noise~\cite{twostream,liang2023depth}, texture and high-level semantic feature sets~\cite{MultiAtt,luo2021generalizing,he2024gazeforensics} and frequency artifacts~\cite{DFDC_Frequency1,DFDC_Frequency2,DFDC_Frequency3}, to distinguish fake faces from real faces. However, one of the major disadvantages of these methods is that they are effective only in some limited scenarios where forgery patterns can be easily obtained from training data and remain relatively stationary across different domains.

\subsection{Cross-domain Face Forgery Detection} 
To promote the generalization to future or unseen forgery methods, recently several authors have proposed to use domain adaptation techniques to bridge the gap between different forgery domains. For example, in~\citep{Xception}, depthwise separable convolution is introduced to enhance the ability to capture more generalizabile patterns for face forgery detection. In ~\cite{DomainAdaption_DANN_2021}, a Domain-Adversarial Neural Network is introduced to learn domain-invariant features. The domain gaps between different forgery domains can also be narrowed through augmented bridging samples, as in ~\cite{DomainAdapation_TMM2023}, while in ~\cite{Guo_2023_ICCV_DA} a guide-space based method is proposed to separate real and different forgery domains in a controllable manner. In ~\cite{sun2021domain}, a
learning-to-weight (LTW) method based on the meta-learning technique is proposed to enhance the face detection performance across multiple domains. In RECCE~\citep{RECCE}, an unsupervised task (i.e., reconstruction) is introduced to enhance the robustness in detecting the cross-domain fake images. 
However, RECCE solely relies on the reconstruction error as an auxiliary task, which can potentially result in misclassifying cross-domain real samples and subsequently increase the risk of false alarms.
Unlike the previous approaches, we focus on modeling the generative process of real faces in different domains based on their low-level visual features to enhance the detection performance while maintaining an acceptable low false positive rate(FPR). 
Unlike Mixup~\cite{zhou2023mixstyle,li2024takes}, which simply shuffles features within a batch, our method specifically mixes statistical features from real images and jointly constrains their representations through both the encoder and decoder. This approach offers enhanced generalizability due to its domain-agnostic properties.

\section{Cross Domain Desensitization Learning}\label{sec_desensitization}
\subsection{Problem Settings and Motivations}\label{sec_problem}
A face forgery detection problem could be formulated as a binary classification problem using a latent variable model. To be specific, the observed data $(x, y)$ are assumed to be sampled from a fixed but unknown joint distribution $P(X, Y)$. To model this generative process, we assume that there exists an encoder $\theta$ which encodes $x$ with a hidden variable $z$. A classifier $\eta$ is then used to make the prediction of $x$ based on its hidden feature $z$. Let $l$ be some loss function, then the learning objective can be defined as follows,
\begin{align}
    \min_{\theta, \eta} \mathbb E_{(x,y)\sim P(X,Y),z\sim P_\theta(z|x)} [l(\eta(z), y)]\label{pre:objective}
\end{align}

To generalize this formulation to the cross-domain setting, for a given set of domains, 
we assume that each domain follows a prior distribution $P(D)$, and is responsible for a data generative process $P(X,Y|D)$. Then the aforementioned data generation process can be decomposed over these domains, as $P(X,Y) = \sum_D P(D)P(X,Y|D)$, and our ultimate goal is to search for an optimal encoder-predictor pair $(\theta, \eta)$, such that the following statistical risk is minimized, 
\begin{align}
    \min_{\theta, \eta} \mathbb E_{d\sim P(D)}\mathbb E_{(x,y)\sim P(X,Y|d),z\sim P_\theta(z|x)} [l(\eta(z), y)]\label{eq_objective_crossdomain}
\end{align}
However, in practice, we may only be accessible to an empirical domain distribution $\hat{P}(D)$. The mismatch between $\hat{P}(D)$ and $P(D)$ can lead to the {\it domain shift} problem, especially when there exists significant unbalance among the numbers of samples observed in different domains. 

To deal with this problem, we assume that there exists some mapping $F$ that decomposes a given data point $x$ into two parts, i.e., $(I,D)=F(X)$, where $I$ denotes intrinsic features and $D$ the domain-specific information. Our goal is then to seek a domain-invariant representation $Z$ with the following conditional independent properties,

\begin{definition} (Domain-invariant representation)
We define a representation \( Z \) of a data point \( X \), sampled from \( P(X) \), as \textbf{domain-invariant} if it is conditionally independent of the domain-specific information \( D \), i.e., \( Z \perp \!\!\! \perp D | I \), where \( I \) represents the intrinsic features of \( X \).
    \label{def:domain-invariant}
\end{definition}

In other words, the representation in Definition \ref{def:domain-invariant} is independent of the domain-specific information, hence being invariant to domain changes. Essentially this requires removing the domain-specific information from the input sample - a procedure we call desensitization. For this we first perform feature decomposition, as described next.

\noindent\textcolor{black}{
\textbf{Discussion.} Here we discuss about the difference between the problem settings in our paper and the well-known \textit{Domain Adaption} problem~\cite{song2022adaptive,lv2024domainforensics}. The foundational premises of the \textit{Domain Adaption} paradigm requires access to target domain samples during training - essentially a \textit{few-shot generalization} framework. In stark contrast, our work pioneers a \textit{zero-shot domain generalization} framework that strictly prohibits any target domain exposure during training. This distinction elevates the problem complexity by orders of magnitude, as models in our framework must achieve robust generalization to completely unseen data distributions through intrinsic domain invariance learning, rather than relying on target domain fine-tuning.
}

\textcolor{black}{
In the following sections, we present a comprehensive theoretical framework for Cross-domain Desensitization Learning (CDL), specifically designed to address the challenging \textit{zero-shot domain generalization} problem. Our methodological exposition proceeds through three pivotal components: First, Theorem \ref{pro:domaininvariant} establishes a critical theoretical connection between our cross-domain desensitization objective (Eq. \ref{eq:denoisingreconstruction_DTV}) and the domain-invariant representation criterion formalized in Definition \ref{def:domain-invariant}. Building upon this theoretical foundation, we subsequently develop a novel denoising reconstruction mechanism (Eq. \ref{eq:denoisingreconstruction}) that operationalizes these theoretical insights. The practical validity of our approach is formally guaranteed by Theorem \ref{theorem:denoising}, which bridges the gap between theoretical formulation and practical implementation by proving that our customized loss function in Eq. (\ref{eq:denoisingreconstruction}) provides a computationally tractable surrogate for optimizing the theoretically-motivated KL-divergence objective in Eq. (\ref{eq:denoisingreconstruction_DTV}).
}

\subsection{Feature Decomposition} \label{subsec_features}
The first step of the proposed method is to decompose the input information, i.e., to find a mapping $F$ that decomposes a given data point $x$ into intrinsic feature $I$ and the domain-specific information $D$. For this we use an encoder $\theta$, that is, $F(\theta;X) = (D,I)$, as follows. First, let the output of the encoder $\theta$ for an input $x$ be $z$, which is assumed to be sampled from the distribution $P_\theta(z|x)$ under the Gaussian assumption. Then following ~\cite{2013Efficient}, domain-style information $D$ can be defined based on the statistics of $X$ in the hidden space, while $I$ is defined to be the domain-normalized features that contain all the information of $X$ except domain information. This leads to the following explicit expression of the decomposition $F$, 
\begin{align}
    D = (\mu(z), \sigma(z))\quad\quad\quad\quad I = \frac{z - \mu(z)}{\sigma(z)}\label{eq:intrinsic}
\end{align}
where $\mu$, $\sigma$ are respectively the mean and variance of $z$.

\subsection{Cross Domain Desensitization}\label{dbc_module}

To learn a representation as defined in Definition \ref{def:domain-invariant}, we first decompose a given real face $x_A$ into two parts, i.e., the intrinsic feature $i_A$ and  the domain information $d_A$, using the function $F$ obtained in the previous section. Then we learn the desired representation $z$ using the following contrasting objective for a pair of real face images,
\begin{align}
\min_{\theta}\mathbb E_{x_A,x_B\sim P(X|Y=0)} [D_{KL}\big(P_\theta(z|i_A, d_B)\big\|P_\theta(z|i_A, d_A)\big)]\label{eq:denoisingreconstruction_DTV}
\end{align}
where $D_{KL}$ is the KL-divergence, $P(X|Y=0)$ is the distribution of real face images (with label $Y=0$), and $P_\theta(Z|I, D)$ is the mechanism $\theta$ that generates the representation $Z$ based on the feature decomposition $I,D$ from any input $X$. In particular, the following result reveals that the optimal solution of Eq.(\ref{eq:denoisingreconstruction_DTV}) ensures a domain-invariant representation, thereby satisfying Definition \ref{def:domain-invariant}.
\begin{theorem}
\label{pro:domaininvariant}
    The optimal solution of minimizing Eq. (\ref{eq:denoisingreconstruction_DTV}) guarantees the representation is conditionally independent of the domain information, i.e., $Z\perp \!\!\! \perp D|I$.
\end{theorem}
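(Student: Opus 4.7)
The plan is to exploit the fundamental property that the KL-divergence is non-negative with equality iff the two distributions coincide almost everywhere. Since Eq.~(\ref{eq:denoisingreconstruction_DTV}) is an expectation of a non-negative quantity, its global minimum is zero, and any optimal $\theta$ must make the integrand vanish almost everywhere under the product measure $P(X\mid Y=0) \otimes P(X \mid Y=0)$. That is, at the optimum,
\begin{equation*}
P_\theta(z \mid i_A, d_B) \;=\; P_\theta(z \mid i_A, d_A)
\end{equation*}
for almost every pair of real faces $(x_A, x_B)$, or equivalently for almost every triple $(i_A, d_A, d_B)$ obtained by applying the decomposition $F$ of Eq.~(\ref{eq:intrinsic}).

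First I would note that because $x_A$ and $x_B$ are drawn independently, the triples $(i_A, d_A, d_B)$ on which the above equality holds form a set of full measure with respect to the product of the marginals of $I$ and $D$ on the real-face class. Fixing an intrinsic code $i$ and letting $d_B$ range over the support of the domain marginal, this forces the map $d \mapsto P_\theta(z \mid i, d)$ to be almost surely constant in $d$ for almost every $i$.

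Next I would translate this invariance into the desired conditional independence statement. By definition, $Z \perp \!\!\! \perp D \mid I$ is equivalent to $P_\theta(z \mid i, d) = P_\theta(z \mid i)$ almost everywhere. Since $d \mapsto P_\theta(z \mid i, d)$ is essentially constant, marginalizing it against the conditional $P(d \mid i)$ returns the same value, which must therefore coincide with $P_\theta(z \mid i)$. The conditional independence claim then follows immediately from Definition~\ref{def:domain-invariant}.

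The step I expect to require the most care is the measure-theoretic passage from ``equality almost everywhere under the product of marginals'' to ``equality on the support of the conditional $P(D \mid I)$''. Strictly speaking, the pairwise objective only enforces invariance on the product support of the marginals of $I$ and $D$; to propagate it to the conditional support one should either invoke the fact that $D$ in Eq.~(\ref{eq:intrinsic}) is by construction a pure style statistic whose marginal is largely decoupled from $I$, or assume absolute continuity of $P(D \mid I)$ with respect to the marginal $P(D)$ on the real-face class. I would state this regularity assumption explicitly so that the conditional independence conclusion of the theorem is unambiguous.
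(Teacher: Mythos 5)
Your argument is correct and follows essentially the same route as the paper's proof: non-negativity of the KL divergence forces $P_\theta(z\mid i_A,d_B)=P_\theta(z\mid i_A,d_A)$ almost everywhere at the optimum, and marginalizing the resulting $d$-invariant map against $P(d\mid i_A)$ via the law of total probability gives $P_\theta(z\mid i,d)=P_\theta(z\mid i)$, i.e., $Z\perp\!\!\!\perp D\mid I$. The regularity assumption you flag at the end is in fact unnecessary: because $d_A$ is extracted from the same image as $i_A$, the pair $(i_A,d_A)$ is distributed according to the joint law rather than the product of marginals, so the almost-everywhere equality already fixes $P_\theta(z\mid i_A,\cdot)$ on the support of $P(D\mid I=i_A)$, with the common value pinned down by marginal-almost-every $d_B$.
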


\begin{proof}
    \textcolor{black}{
    We remark that the optimal solution $\theta^*$ of Eq. (\ref{eq:denoisingreconstruction_DTV}) is $P_{\theta^*}(z|i_A, d_B) = P_{\theta^*}(z|i_A, d_A)$, due to the fact that $P_\phi(i_A, d_A|z)>0, \forall i_A, d_A, z$. This leads to,
    \begin{align}
        &\forall d_A,d_B\sim P(D), P_\theta^*(z|i_A, d_B) = P_\theta^*(z|i_A, d_A) \label{eq:proof11}\\
        \Rightarrow^{(a)} & \forall d_A,d_B\sim P(D), P_\theta^*(z|i_A, d_A, d_B) = P_\theta^*(z|i_A, d_A)=P_\theta^*(z|i_A, d_B)\nonumber\\
        \Rightarrow^{(b)} &\forall d\sim P(D), P_\theta^*(z|i_A, d) = \mathbb E_{d'\sim P(d'|i_A)}P_\theta^*(z|i_A, d, d')\nonumber\\
        &\quad\quad\quad\quad\quad\quad\quad\quad=\mathbb E_{d'\sim P(d'|i_A)}P_\theta^*(z|i_A, d')\nonumber\\ &\quad\quad\quad\quad\quad\quad\quad\quad= P_\theta^*(z|i_A)\label{eq:proof12}\\
        \Rightarrow &Z\perp \!\!\! \perp D|I
    \end{align}}
\textcolor{black}{
where \textbf{Step (a): Conditional Independence and Redundancy Elimination. }
The transition from Eq. (\ref{eq:proof11}) to Eq. (\ref{eq:proof12}) arises from the conditional independence induced by the optimal parameterization \(\theta^*\). Specifically, since Eq. (\ref{eq:proof11}) asserts the equivalence \(P_{\theta^*}(z|i_A, d_B) = P_{\theta^*}(z|i_A, d_A)\) for arbitrary \(d_A, d_B \sim P(D)\), it implies that conditioning on \textbf{any} data instance (e.g., \(d_A\) or \(d_B\)) provides no additional information about \(z\) beyond the identifier \(i_A\).  
Formally, for the joint conditioning case \(P_{\theta^*}(z|i_A, d_A, d_B)\):  
1. \textbf{Redundancy of \(d_B\)}: Given the pair \((i_A, d_A)\), the additional condition \(d_B\) becomes redundant due to the equivalence in Eq. (\ref{eq:proof11}). Hence,  
   \[
   P_{\theta^*}(z|i_A, d_A, d_B) = P_{\theta^*}(z|i_A, d_A).
   \]  
2. \textbf{Symmetry}: By symmetry between \(d_A\) and \(d_B\), we simultaneously derive  
   \[
   P_{\theta^*}(z|i_A, d_A, d_B) = P_{\theta^*}(z|i_A, d_B).
   \]  
This establishes the equality chain \(P_{\theta^*}(z|i_A, d_A) = P_{\theta^*}(z|i_A, d_B)\) in Eq. (\ref{eq:proof12}).}

\textcolor{black}{
\textbf{Step (b): Marginalization via Total Probability.} 
The first equality in derivation (b) follows directly from the Law of Total Probability, expanding the conditional distribution by integrating over the data variable \(d\). The second equality leverages the redundancy property proven in Step (a): since \(P_{\theta^*}(z|i_A, d)\) remains invariant to the choice of \(d\), marginalizing over \(d \sim P(D)\) preserves the distributional equivalence, yielding  
\[
P_{\theta^*}(z|i_A) = \mathbb{E}_{d\sim P(D)}[P_{\theta^*}(z|i_A, d)] = P_{\theta^*}(z|i_A, d).
\]  
Here, the marginalization over \(d\) collapses to a single representative instance due to the uniformity guaranteed by Eq. (\ref{eq:proof11}).}
\textcolor{black}{Then the second equality in Eq.(\ref{eq:proof12}) holds because of the total probability theorem. This completes the proof.}
\end{proof}

 In what next, we describe our Contrastive Desensitization Network that solves Eq. (\ref{eq:denoisingreconstruction_DTV}) by minimizing its upper bound. After this, we plug the learnt representation in E.q.(~\ref{eq_objective_crossdomain}) to train a downstream forgery face detector.

\section{Contrastive Desensitization Network}

 \begin{figure*}[t]

    \setlength{\abovecaptionskip}{0.cm}
    \setlength{\belowcaptionskip}{-0.5cm}
    \centering
    \includegraphics[width=0.9\linewidth]{ ./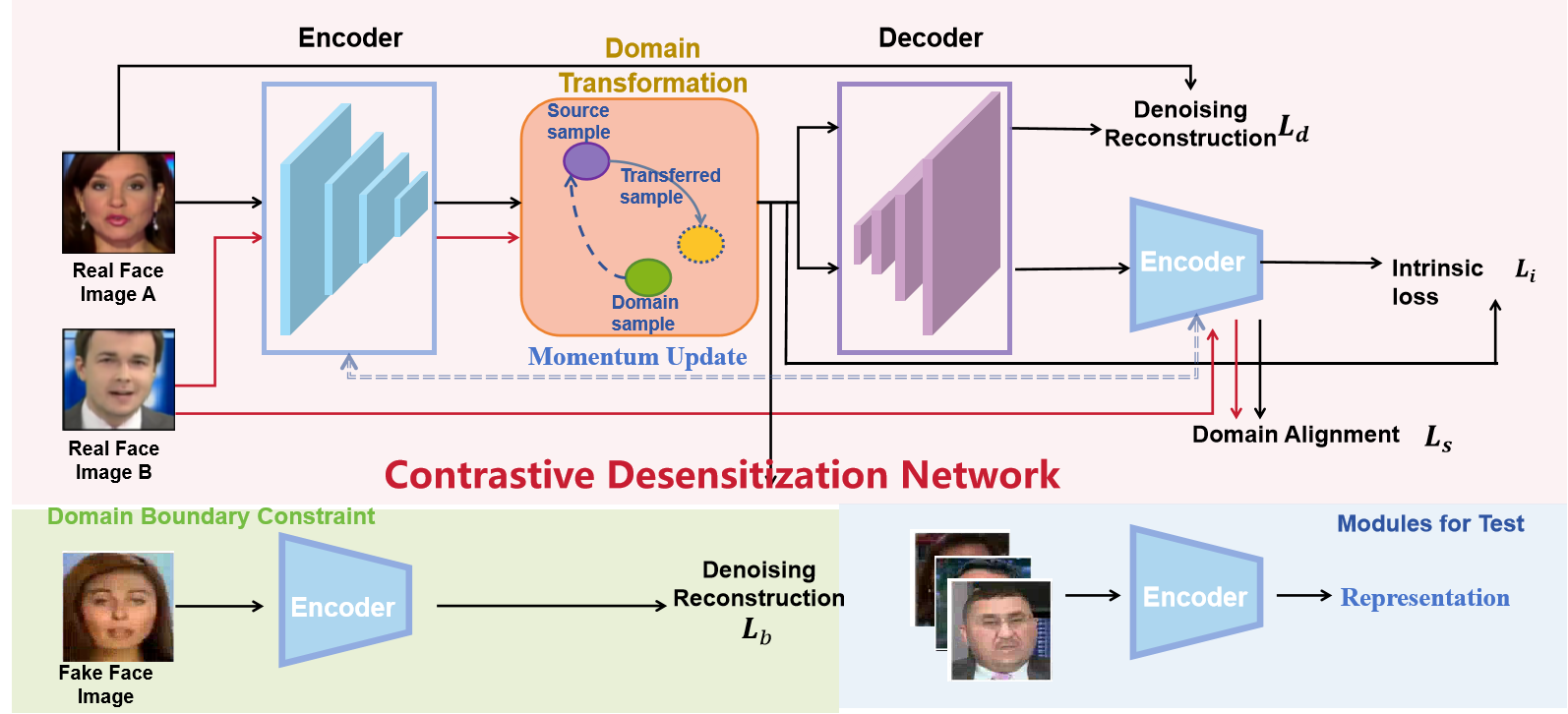}
    \caption{The overall architecture of the proposed CDN for face forgery detection.
To learn domain-invariant representations \( Z \) from given real face images \( X \). During the training phase of the CDN framework, the input image \( X \) is first processed by an encoder to extract its initial representation \( z \). Next, \( z \) is separated into intrinsic features \( I \) and domain-specific features \( D \) in the latent space. A domain transformation is then applied to mix \( I \) and \( D \), generating a new representation \( z_{\text{out}} \). Finally, \( z_{\text{out}} \) is passed through a decoder to reconstruct the original image, ensuring the removal of domain-specific noise while preserving intrinsic features. Three components to ensure this objective: 
Intrinsic and Domain Alignment for ensuring consistency across domains while retaining intrinsic features. Denoising Reconstruction to enhance the reliability of domain-invariant representations via decoder-based reconstruction}
    \label{fig:framework}
\end{figure*}
 In this section, we give a detailed description of the proposed CDN approach. The overall architecture is given in Figure \ref{fig:framework}.
 To learn a domain-invariant representation $Z$ for a given face image $X$, it is crucial to separate its intrinsic feature $I$ and domain-specific features $D$. 
 
 For this purpose, given two random real face samples from two different domains, we first extract their low-level visual features using an encoder, and then process them with a domain transformation operation. As shown in Figure \ref{fig:framework}, to ensure that the task of desensitization learning is feasible, three extra key components are equipped based on this representation, i.e., intrinsic/domain alignment and denoising reconstruction.

 The intrinsic/domain alignment and denoising reconstruction are used to model real human faces across different domains, which can be thought of as a hybrid rewarding mechanism that provides a feedback signal to our desensitization network. After learning, only the encoder module would be kept to yield new representation for a given unseen face image.

\subsection{Domain transformation}\label{sec_domaintransformation}
The first step of our CDN network is to perform a domain transformation $T$ for two random samples from different domains (but belong to the same real category.). In fact, the domain transform mixes the low-level visual features of the two samples, while yielding a new intrinsic representation in the same feature space.

In particular, for a given pair of low-level feature sets $z_A$ and $z_B$, extracted by an encoder from two real face images $x_A$ and $x_B$, respectively, we mix them based on their feature statistics~\cite{huang2017arbitrary}, i.e., the mean and standard deviation. as follows, 
\begin{align}
    z_{out} = \sigma_B\frac{z_A - \mu_A}{\sigma_A} + \mu_B
    \label{eq:domaintransformation}
\end{align}
where the $\mu_A,\mu_B,\sigma_A,\sigma_B$ are the feature statistics ($\mu$ is the mean and $\sigma$ is the standard deviation) calculated over $z_A$ and $z_B$. The transformation in Eq.(\ref{eq:domaintransformation}) could be seen as a domain normalization to $z_A$, which essentially aligns $z_{A}$ with the feature statistics of $z_B$, making the yielded feature $z_{out}$ has the same style as $z_B$. In ~\cite{zhou2023mixstyle}, this is thought as a feature augmentation procedure for representation learning, although we have a different interpretation for this (please see Section~\ref{subsec_features} for details). 

To ensure that $z_{out}$ preserves the intrinsic feature of $z_A$, we introduce two additional losses for further verification: i.e., intrinsic loss and domain alignment loss, described below. Let the encoder and the decoder in Figure \ref{fig:framework} be parameterized via $\theta$ and $\phi$ respectively. Given the transformed feature $z_{out}$, the intrinsic loss is defined as,
\begin{align}
    L_i = \|\theta(\phi(z_{out})) - z_{out}\|_2^2
    \label{eq:contentalignment}
\end{align}
Such a loss (also known as content loss) is widely used in textural synthesis~\cite{ulyanov2016texture,ulyanov2017improved}, and is beneficial to maintain the structural information of the reconstructed images. 

Another important aspect is the domain alignment between the pair of images, which can be defined as follows,

\begin{align}
    L_s &= \sum_{i=1}^L \|\mu(\theta_i(x_B)) - \mu(\theta_i(\phi(z_{out})))\|_2^2 \\
    &+ \sum_{i=1}^L \|\sigma(\theta_i(x_B)) - \sigma(\theta_i(\phi(z_{out})))\|_2^2
    \label{eq:domainalignment}
\end{align}
where $\mu$ and $\sigma$ are the feature statistics extracted via an MLP as mentioned before. The domain alignment loss in Eq.(\ref{eq:domainalignment}) allows us to align the domain information of $x_A$ with $x_B$~\cite{li2017demystifying,zhou2023mixstyle}.

\subsection{Learning to Desensitize}

To obtain a domain-invariant representation $z$ for a given real face image $x_A$, our idea is learning to desensitize based on the output $z_{out}$ of the domain transformation. In particular, as $z_{out}$ has been produced with the style of some other domain of $x_B$, to learn to remove such style information, what we need is simply to project it back to its original manifold where $x_A$ lies, as follows, 
\begin{align}
    L_d &= \|\phi(z_{out}) - x_A\|_2^2
    \label{eq:denoisingreconstruction}
\end{align}
In words, we learn to desensitize the domain style information of $x_B$ from $z_{out}$ with the help of a learned decoder $\phi$. We give the theoretical justification for this in the next section.

The denoising reconstruction in Eq.(\ref{eq:denoisingreconstruction}) is as,
\textcolor{black}{
\begin{align}
    \min_{\theta,\phi}\mathbb E_{z\sim P_\theta(z|i_A, d_B)} \|\phi(z) - x_A\|_2\label{eq:appendix21}
\end{align}}
And the probabilistic modeling of denoising reconstruction is,
\begin{align}
    \max_{\theta,\phi}\mathbb E_{z\sim P_\theta(z|i_A, d_B)} \log P_\phi(i_A, d_A|z)\label{eq:appendix22}
\end{align}
Before the derivation, we need to assume the likelihood $P_\phi(i_A, d_A|z) = P_\phi(x_A|z)$ is isotropic Gaussian ($\Sigma = \lambda I_{K\times K}$, $K$ is the dimension of $x_A$, $\lambda$ is the eigenvectors of variance matrix). Then we give the derivation from Eq.(\ref{eq:appendix22}) to Eq.(\ref{eq:appendix21}) as follows,
\begin{align}
    & \max_{\theta,\phi}\mathbb E_{z\sim P_\theta(z|i, d_B)} \log P_\phi(i, d_A|z)\\
    \Leftrightarrow&\max_{\theta,\phi}\mathbb E_{z\sim P_\theta(z|i, d_B)} \log \big[\frac{1}{\sqrt{(2\pi)^K|\Sigma|}}\exp{((\phi(z) - x_A)^T\Sigma^{-1}(\phi(z) - x_A))}\big]\\
    \Leftrightarrow&\min_{\theta,\phi}\mathbb E_{z\sim P_\theta(z|i, d_B)} \|\phi(z) - x_A\|_2
\end{align}

\subsection{Domain Boundary Constraint. }
To prevent {\it over-generalization}, it is necessary to constrain the boundary of the domain, maintaining a sufficient margin between the real image domain and the fake image domain. For this, we utilize a contrastive loss. Let $z_{out}^i$ and $z_{out}^j$ denote the domain-invariant representation of the real image $x_S^i$ and $x_S^j$, respectively, and $z_{f}^j$ denote the representation of the fake image $x_F^j$. Then the contrastive loss is defined as: 
\begin{align}
    L_{b} = &\sum_{N_{r}N_r}\sum_{i,j\in R} Dis(z_{out}^i,z_{out}^j)- \sum_{N_{r}N_f}\sum_{i\in R, j\in F} Dis(z_{out}^i,z_{f}^j)\label{eq:contrastiveloss}
\end{align}
Where \( R \) and \( F \) represent the sets of real and fake images, and \( N_r \) and \( N_f \) denote their respective sizes. The function \( Dis(x, y) \) is a cosine distance-based metric, expressed as:
\begin{align}
    Dis(x,y) = \frac{1}{2}\cdot [1 - \frac{x}{\|x\|_2}\cdot \frac{y}{\|y\|_2}]
\end{align}
\textcolor{black}{where $x,y$ are two arbitrary vectors, and $\|\cdot\|_2$ is the 2-norm operator.}

\subsection{Theoretical Justification for the Proposed Method}
Next, we show that under certain mild conditions which will be explained later, the proposed CDN approach solves 
Eq. (\ref{eq:denoisingreconstruction_DTV}) by minimizing its upper bound.  In particular, with the help of Eq.(\ref{eq:contentalignment}) and Eq.(\ref{eq:domainalignment}), the domain transformation described in Section~\ref{sec_domaintransformation} implements the following transformation: $T(F(x_A), F(x_B)) = (i_A, d_B)$ by Eq.(\ref{eq:domaintransformation}). That is, it essentially constructs a new hybrid sample $(i_A, d_B)$ by perturbing $i_A$ from its manifold with domain noise $d_B$. Hence what the denoising reconstruction objective Eq.(\ref{eq:denoisingreconstruction}) does is simply learning to recover from such perturbation so as to return to the manifold of the intrinsic features where $i_A$ originally lies, i.e., learning to desensitize domain shift.

More formally, using the language of probabilistic modeling, the denoising reconstruction objective (Eq.(\ref{eq:denoisingreconstruction})) can be reformulated as,
\begin{align}
\max_{\theta,\phi}\mathbb E_{z\sim P_\theta(z|i_A, d_B)} \log P_\phi(i_A, d_A|z)\label{eq:denoisingreconstruction_prob}
\end{align}
where $i_A$ means the intrinsic feature of sample $A$, $d_A$ is sample $A$'s domain information as is introduced in Eq.(\ref{eq:domaintransformation}).

The following Theorem \ref{theorem:denoising} builds the connection between the denoising reconstruction and domain desensitization explicitly. For the sake of simplicity, we assume that the decoder $\phi$ is fixed.

\begin{theorem}\label{theorem:denoising}
    Under the assumption that the probability density of the hidden space is commonly larger than the original sample space, i.e., $\forall z,\theta,P_\theta(z|i_A,d_A)\geq P_\phi(i_A,d_A|z)$. Then maximizing the denoising reconstruction term in Eq.(\ref{eq:denoisingreconstruction_prob}), i.e., 
    \begin{align}
       \max_{\theta}\mathbb E_{z\sim P_\theta(z|i_A, d_B)} \log P_\phi(i_A, d_A|z)\label{eq:theorem11}
    \end{align}
    is equivalent to minimizing the upper bound of the following objective,
    \begin{align}
        D_{KL}\big(P_\theta(z|i_A, d_B)\big\|P_\theta(z|i_A, d_A)\big)\label{eq:theorem12}
    \end{align}
    , where $D_{KL}$ is the KL-divergence.
\end{theorem}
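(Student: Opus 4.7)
The plan is to upper-bound the KL divergence in Eq.~(\ref{eq:theorem12}) by expanding its definition and then invoking the hypothesis $P_\theta(z|i_A,d_A)\ge P_\phi(i_A,d_A|z)$ to swap the intractable conditional $P_\theta(z|i_A,d_A)$ for the decoder likelihood $P_\phi(i_A,d_A|z)$ that actually appears in the denoising reconstruction objective. First I would write
$$D_{KL}\bigl(P_\theta(z|i_A,d_B)\,\|\,P_\theta(z|i_A,d_A)\bigr)=\mathbb E_{z\sim P_\theta(z|i_A,d_B)}\bigl[\log P_\theta(z|i_A,d_B)\bigr]-\mathbb E_{z\sim P_\theta(z|i_A,d_B)}\bigl[\log P_\theta(z|i_A,d_A)\bigr]$$
and isolate the second, domain-sensitive term. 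Taking logarithms of the pointwise hypothesis yields $-\log P_\theta(z|i_A,d_A)\le -\log P_\phi(i_A,d_A|z)$ for every $z$, and integrating against the density $P_\theta(z|i_A,d_B)$ preserves the inequality.

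Combining the two steps gives the explicit surrogate
$$D_{KL}\bigl(P_\theta(z|i_A,d_B)\,\|\,P_\theta(z|i_A,d_A)\bigr)\;\le\;-H\bigl(P_\theta(z|i_A,d_B)\bigr)\;-\;\mathbb E_{z\sim P_\theta(z|i_A,d_B)}\bigl[\log P_\phi(i_A,d_A|z)\bigr]$$
whose second summand is exactly the negation of the denoising-reconstruction objective in Eq.~(\ref{eq:theorem11}); maximizing that objective therefore tightens the bound from above, provided the entropy term does not drift adversarially. This is the essential bridge the theorem is asserting, since the left-hand side is the divergence that Theorem~\ref{pro:domaininvariant} has already shown to enforce domain invariance.

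The main obstacle will be the entropy $H(P_\theta(z|i_A,d_B))$, which a priori also depends on $\theta$ and could in principle be inflated to cheapen the bound. My plan is to rely on the Gaussian posterior convention already introduced in Section~\ref{subsec_features}, namely that $P_\theta(z|x)$ is Gaussian with a variance either held fixed or decoupled from the mean network being optimized; under this standard variational-inference assumption the entropy contributes only a constant independent of the parameters that control $i_A$ and $d_B$, so the upper bound collapses to a constant plus the negated log-likelihood and the claimed equivalence follows. To close the loop with the implementable loss, I would then reuse the isotropic-Gaussian likelihood adopted between Eqs.~(\ref{eq:appendix21}) and~(\ref{eq:appendix22}) to rewrite $-\mathbb E\log P_\phi(i_A,d_A|z)$ as the squared error in Eq.~(\ref{eq:denoisingreconstruction}), thereby tying the practical CDN objective back to the domain-invariance criterion of Definition~\ref{def:domain-invariant}.
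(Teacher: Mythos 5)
Your proposal is correct and shares the paper's key lemma --- using the hypothesis $P_\theta(z|i_A,d_A)\ge P_\phi(i_A,d_A|z)$ pointwise, integrated against $P_\theta(z|i_A,d_B)$, to trade the intractable encoder posterior for the decoder likelihood --- but it resolves the leftover entropy term differently. The paper runs the chain in the opposite direction: it starts from the negative reconstruction loss and shows
\begin{align}
-\mathbb E_{z\sim P_\theta(z|i_A,d_B)}\log P_\phi(i_A,d_A|z)\;\ge\;\mathbb E_{z\sim P_\theta(z|i_A,d_B)}\log\tfrac{1}{P_\theta(z|i_A,d_A)}\;\ge\;D_{KL}\big(P_\theta(z|i_A,d_B)\big\|P_\theta(z|i_A,d_A)\big),
\end{align}
where the second step simply adds $\mathbb E_{z\sim P_\theta(z|i_A,d_B)}\log P_\theta(z|i_A,d_B)$ and declares it negative on the grounds that the latent density stays below one; the surrogate upper bound is then literally the objective of Eq.~(\ref{eq:theorem11}) with no entropy term remaining, at the price of that unstated boundedness condition (which can fail, e.g., for low-variance Gaussian posteriors). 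Your route instead keeps the bound $D_{KL}\le -H\big(P_\theta(z|i_A,d_B)\big)-\mathbb E_{z\sim P_\theta(z|i_A,d_B)}\log P_\phi(i_A,d_A|z)$, which is tighter and makes explicit the real obstruction: the bound is co-minimized with the reconstruction term only if $H\big(P_\theta(z|i_A,d_B)\big)$ is independent of the parameters being optimized. You secure this with a fixed-variance (or variance-decoupled) Gaussian encoder; note that Section~\ref{subsec_features} assumes Gaussianity but not fixed variance, so this is an added hypothesis --- yet it is no stronger than the density-below-one condition the paper uses silently, and it justifies the theorem's claimed ``equivalence'' more transparently. Your final reduction of $-\mathbb E\log P_\phi(i_A,d_A|z)$ to the squared-error loss of Eq.~(\ref{eq:denoisingreconstruction}) coincides with the paper's own isotropic-Gaussian derivation, so the bridge to the implemented CDN loss is the same.
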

Eq.\(\eqref{eq:theorem11}\) captures intrinsic features shared across domains, treating domain-specific information as noise, while Eq.\(\eqref{eq:theorem12}\) explicitly denoises to obtain domain-invariant representations. \textcolor{black}{To further explain how Eq.\(\eqref{eq:theorem12}\) guarantees the extraction of domain-invariant representations, we plot the diagram in Figure \ref{fig:kl}, where the minimization of two-side (forward and backward) KL divergence could align the two latent distributions totally. Then the merged representation would be recognized only by the intrinsic features, i.e., be domain-invariant.
\begin{figure}[h]
    \centering
    \includegraphics[width=0.8\linewidth]{ ./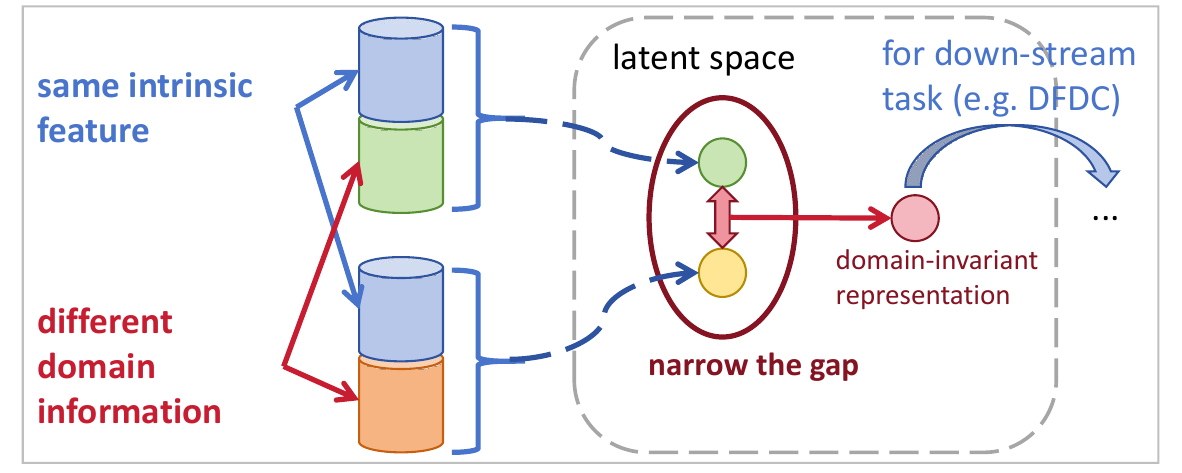}
       \caption{Diagram of the domain-invariant objective.}
       \label{fig:kl}
\end{figure}}

The two objectives are theoretically connected, as shown in the following proof, where \(\eqref{eq:theorem11}\) is transformed into \(\eqref{eq:theorem12}\) through minimization of the negative log-likelihood and scaling.
\begin{proof}

The proof consists of the following steps: First, we transform Eq.(\ref{eq:theorem11}) into a minimization of the negative log likelihood; then, based on assumptions and by introducing an additional negative term for scaling, it ultimately takes the form of the KL divergence as shown in Eq.(\ref{eq:theorem12}). The detailed derivation is as follows:
    \begin{align}
        &\max_{\theta}\mathbb E_{z\sim P_\theta(z|i_A, d_B)} \log P_\phi(i_A, d_A|z)\\
        \Leftrightarrow & \min_{\theta} - \mathbb E_{z\sim P_\theta(z|i_A, d_B)} \log P_\phi(i_A, d_A|z)
    \end{align}
    Then,
    \begin{align}
        &-\mathbb E_{z\sim P_\theta(z|i_A, d_B)} \log P_\phi(i_A, d_A|z)\\
        =& \mathbb E_{z\sim P_\theta(z|i_A, d_B)} \log \frac{1}{P_\phi(i_A, d_A|z)}\\
        \geq^{(a)} & \mathbb E_{z\sim P_\theta(z|i_A, d_B)} \log \frac{1}{P_\phi(z|i_A, d_A)}\\
        \geq^{(b)} & \mathbb E_{z\sim P_\theta(z|i_A, d_B)} \log \frac{1}{P_\phi(z|i_A, d_A)}+ \mathbb E_{z\sim P_\theta(z|i_A, d_B)} \log P_\phi(z|i_A, d_B)\\
        =& D_{KL}\big(P_\theta(z|i_A, d_B)\big\|P_\theta(z|i_A, d_A)\big)
    \end{align}
    Note that the inequality $(a)$ holds because of the assumption that $P_\theta(z|i_A,d_A)\geq P_\phi(i_A,d_A|z)$. The inequality $(b)$ holds because $P_\phi(z|i_A, d_B)<1$, so that $E_{z\sim P_\theta(z|i_A, d_B)}\log P_\phi(z|i_A, d_B) < 0$.
    This completes the proof.
\end{proof}

The theorem reveals that solving the denoising objective Eq.(\ref{eq:denoisingreconstruction}) approaches from above the optimal solution of an alternative problem given in Eq.(\ref{eq:denoisingreconstruction_DTV}), which is in turn equivalent to seeking a robust representation against domain changes.


\begin{remark}

\textcolor{black}{
The essence of Theorem \ref{theorem:denoising} lies in its dual capability: \textbf{denoising-driven expectation maximization} and \textbf{implicit latent space alignment across domains}. In practical implementations, these theoretical properties translate into two critical operational advantages:  
\begin{enumerate}
    \item \textbf{Renoising-driven expectation maximization.} By constraining the KL-divergence between latent distributions under different data conditions ($d_A$ vs. $d_B$), the model learns to extract domain-invariant representations from semantically similar samples contaminated by domain-specific variations (e.g., imaging artifacts in medical devices or lighting differences in surveillance footage). This mechanism effectively mitigates the domain shift problem, where traditional models degrade due to distributional discrepancies between training and deployment environments.
    \item \textbf{Implicit latent space alignment across domains.} Crucially, the alignment is achieved without requiring explicit domain labels - the optimization solely relies on denoising reconstruction objectives. This label-agnostic nature makes the theorem particularly valuable for: Deepfake detection: Aligning latent spaces of manipulated and authentic media across diverse forgery techniques (e.g., FaceSwap vs. DeepFaceLab artifacts); Low-resource scenarios: Applications where domain annotation is impractical (e.g., cross-lingual speech processing, multi-center medical imaging); Dynamic environments: Situations with continuously evolving domains (e.g., adapting to new camera sensors in autonomous vehicles)  
\end{enumerate}
The implicit alignment occurs through the theorem's probabilistic coupling - maximizing $P_\phi(i_A,d_A|z)$ under noisy $d_B$ inputs forces the encoder to discard domain-specific noise patterns while preserving semantic content in $z$. This creates a "purified" latent subspace resilient to both explicit adversarial perturbations and natural domain variations.}
    
\end{remark}

Before ending this section, we would like to give some intuitive explanation on the assumed conditions of Theorem \ref{theorem:denoising}, i.e., $\forall z,\theta,P_\theta(z|i_A,d_A)\geq P_\phi(i_A,d_A|z)$. Actually, it is not so restrictive as it looks - it basically says that  we should project samples into the latent space in such a way that facilitates their reconstruction, given a fixed decoder $\phi$ - a condition that is not so hard to satify in practice. \textcolor{black}{
In specific applications and implementations, the condition often holds naturally. For instance, in the context of a variational autoencoder (VAE), the two distributions are typically modeled as Gaussian: $P_\theta(z|i_A,d_A)=\mathcal{N}(z; \mu_\theta, \sigma_\theta^2), P_\phi(i_A,d_A|z)=\mathcal{N}(i_A,d_A; \mu_\phi, \sigma_\phi^2)$. To simplify, we denote $x=(i_A,d_A)$ and we can compare the two distributional values by calculating their log difference as,
\begin{align}
    \Delta &= \log P_\theta(z|x) - \log P_\phi(x|z)\\
    &= \underbrace{\frac{n}{2} \log(2\pi\sigma_\phi^2) - \frac{m}{2} \log(2\pi\sigma_\theta^2)}_{\text{variance term}} + \underbrace{\frac{\|x - \mu_\phi\|^2}{2\sigma_\phi^2} - \frac{\|z - \mu_\theta\|^2}{2\sigma_\theta^2}}_{\text{bias term}}\nonumber
\end{align}
where $n$ is the dimension of $x$, while $m$ is the dimension of $z$. 
In the training of VAE, the reconstruction loss like Eq.(\ref{eq:denoisingreconstruction}) tends to minimize $\frac{\|x - \mu_\phi\|^2}{2\sigma_\phi^2}$, so enhancing the $\sigma_\phi^2$, hence in many cases, we would have $\sigma_\phi^2>\sigma_\theta^2$. Then the variance term would always be positive. In the application of DFDC, the input samples are commonly high-dimensional images, i.e., $n>>m$, the reconstruction loss $\|x - \mu_\phi\|^2$ would be larger than $\|z - \mu_\theta\|^2$ commonly, in which case the bias term would tend to be positive. In conclusion, we can assert that the assumption holds, and $\Delta>0$ in scenarios where the input data is high-dimensional and the method is implemented within a VAE framework.
}

\begin{table*}[htbp]
    \centering
    \setlength{\abovecaptionskip}{0.cm}
    \setlength{\belowcaptionskip}{-0.5cm}
    \resizebox{\linewidth}{!}{
    \begin{tabular}{lcccccccccccc}
        \toprule
        Methods &\multicolumn{2}{c}{FF++(c23)} & \multicolumn{2}{c}{FF++(c40)} & \multicolumn{2}{c}{Celeb-DF}& \multicolumn{2}{c}{WildDeepfake} & \multicolumn{2}{c}{DFDC} & \multicolumn{2}{c}{\textbf{Average}}\\
		     &ACC  & AUC   &ACC  & AUC  &ACC  & AUC  &ACC  & AUC &ACC  & AUC  &ACC  & AUC \\
        \midrule
        Xception\cite{Xception} &95.73 &96.30 &86.86 &89.30 &97.90 &99.73 &77.25 &86.76 &79.35 &89.50 & 87.42 &92.32 
\\
        $F^{3}$-Net\cite{DFDC_Frequency1} &97.52 &98.10 &90.43 &93.30 &95.95 &98.93 &80.66 &87.53 &76.17 &88.39 &88.15 &	93.25 
\\
         Add-Net\cite{WildDeepfake} &96.78 &97.74 &87.50 &91.01 &96.93 &99.55 &76.25 &86.17 &78.71 &89.85 & 87.23 	&92.86 
\\
        MultiAtt\cite{MultiAtt}  &97.60 &99.29 &88.69 &90.40 &97.92 &99.94 &82.86 &90.71 &76.81 &90.32 & 88.78 	&94.13 
\\
       
        RFM\cite{RFM}    &95.69 &98.79 &87.06 &89.83 &97.96 &99.94 &77.38 &83.92 &80.83 &89.75 & 87.78 	&92.45 
\\
        RECCE\cite{RECCE}  &97.06 &99.32 &91.03 &95.02 &98.59 &99.94 &83.25 &92.02 &81.20 &91.33 & 90.23 	&95.53 
\\
        ITA-SIA\cite{2022_ITA-SIA} &97.64 &99.35 &90.23 &93.45 &98.48 &99.96 &83.95 &91.34 &– &– & - & -\\
        DisGRL\cite{shi2023discrepancy} &\textbf{97.69} &99.48 &91.27 &95.19 &98.71 &99.91 &84.53 &93.27 &82.35 &92.50 &90.91 &	96.07 
\\

FIC\cite{bai2024towards}&97.14&99.29&91.27 &92.30 &-&-&-&-&-&-&-&-
\\
        \textcolor{black}{MDDE\cite{qiu2024multi}} &\textcolor{black}{97.30} &\textcolor{black}{\textbf{99.49}}&\textcolor{black}{90.67} & \textcolor{black}{95.21} &\textcolor{black}{98.63} &\textcolor{black}{99.97} &\textcolor{black}{84.46} &\textcolor{black}{91.93}&\textcolor{black}{84.91} & \textcolor{black}{91.24} & \textcolor{black}{91.19} & \textcolor{black}{95.57}\\
        CDN(Ours) &97.57 \textcolor{black}{$\pm0.8$} &99.29 \textcolor{black}{$\pm0.4$}&\textbf{91.54} \textcolor{black}{$\pm0.7$}&\textbf{95.30} \textcolor{black}{$\pm0.1$}&\textbf{99.94} \textcolor{black}{$\pm0.1$}&\textbf{99.99} \textcolor{black}{$\pm0.1$}&\textbf{85.21}\textcolor{black}{$\pm0.4$}&\textbf{93.41}\textcolor{black}{$\pm0.3$}&\textbf{86.87} \textcolor{black}{$\pm1.4$}& \textbf{93.24}\textcolor{black}{$\pm0.7$}
        & \textbf{92.23}&	\textbf{96.24}\\
        \bottomrule
    \end{tabular}}
    \caption{ Comparative performance for various methods with intra-dataset evaluation. \textcolor{black}{The standard deviations of our method's results are calculated on 4 random seeds.}}
    \label{intra-dataset}
\end{table*}

\section{Experiments}

\subsection{Experimental Settings}

\paragraph{Datasets.} Our experiments are conducted on four challenging datasets specifically designed for deepfake detection, including FaceForensics++ (FF++)~\cite{FF++}, CelebDF~\cite{Celeb-DF}, WildDeepfake (WDF)~\cite{WildDeepfake}  and DFDC~\cite{DFDC}.

As the most widely used dataset, FF++\cite{FF++} contains 1000 real videos collected from Youtube and 4000 forgery videos from four subsets of different face forgery techniques, \textit{i.e} Deepfakes (DF)\cite{DF}, Face2Face (F2F)\cite{Face2Face}, FaceSwap (FS)\cite{FS}, and NeuralTextures (NT)\cite{NeuralTextures}. 
Among them, Deepfakes (DF)\cite{DF} and FaceSwap (FS)\cite{FS} belong to face replacement forgery, and Face2Face (F2F)\cite{Face2Face}and NeuralTextures (NT)\cite{NeuralTextures} belong to facial expression attribute forgery. In terms of compression method, the data set provides two different compression levels: c23(constant rate quantization parameter equal to 23) and c40(the quantization parameter is set to 40).

The CelebDF~\cite{Celeb-DF} dataset contains 480 real videos and 795 forged videos. The real videos are sourced from YouTube, with an average length of 13 seconds and a frame rate of 30 fps. The authors have made improvements including enhancing the resolution, implementing facial color transformation algorithms, blending the boundaries of synthetic faces, and reducing the jitter in the synthesized videos to the visual quality of the forged videos. 

The DFDC~\cite{DFDC} is the official dataset for the Deepfake Detection Challenge. It comprises a total of 119,196 videos, with a ratio of genuine to forged videos of approximately 1:5. The original videos were recorded by actors, with an average length of around 10 seconds. This dataset encompasses a broad range of video resolutions and features diverse and complex scenarios, including dark backgrounds with Black subjects, profile views, people in motion, strong lighting conditions, and scenes with multiple individuals.

The WildDeepfake~\cite{WildDeepfake} is a more challenging dataset which consists of 7,314 face sequences extracted from 707 deepfake videos collected completely from the internet. 

Consistent with previous works\cite{RECCE}, this paper employs the same data preprocessing methods and test set selection to ensure a fair and objective comparison.
\paragraph{Inference Details}
During the training process, in order to achieve desensitization of the style features of real images, two features are randomly selected, one as the source domain and the other as the target domain. During the inference process, we input the first layer features of the encoder in the auto-encoder and the first and second layer features of the decoder into the downstream task for the final prediction.
\paragraph{Implementation Details}
We implement the proposed Contrastive Desensitization Network (CDN) within a general face forgery detection framework, where the produced domain-invariant representation is fed into the downstream task module for final forgery detection. In particular, our downstream task module adopts two sequential process steps, i.e., information aggregation, multi-scale graph reasoning and attention-guided feature fusion, the details of which can be found in Appendix \ref{appendix:implementation}. This pipeline has been proven effective for face forgery detection in many previous works~\cite{RECCE,shi2023discrepancy,shuai2023locate}. Let the loss for classification be $L_{cls}$, which can be any binary classification loss function, such as Binary Cross Entropy (BCE). Then the whole loss function of our system is as follows,
\begin{align}
    L = L_{cls} + \lambda_1 L_{d} + \lambda_2 L_{i} + \lambda_3 L_{s}
\end{align}\label{Loss}
where the denoising reconstruction loss $L_d$ (Eq.(\ref{eq:denoisingreconstruction})), the intrinsic alignment loss $L_i$ (Eq.(\ref{eq:contentalignment})), and the domain alignment loss $L_s$ (Eq.(\ref{eq:domainalignment})) are included. And $\lambda_1,\lambda_2$ and $\lambda_3$ are three coefficients balancing the relative importance of these losses, whose values are set by cross-validation. We train our model with a batch size of 16, the Adam~\cite{kingma2014adam} optimizer with an initial learning rate of 2e-4 and a weight decay of 1e-5. A step learning rate scheduler is used to adjust the learning rate. Two NVIDIA 3090Ti GPUs are used in our experiments. We empirically set the hyperparameter of eq.\ref{Loss} as $\lambda_1$= 0.1,$\lambda_2$= 0.1,$\lambda_3$= 0.1. The second encoder (i.e., the one on the right half of Fig.\ref{fig:framework}, used for loss evaluation) is trained using momentum update with momentum value set to be 0.999 as recommended in \cite{he2020momentum}

\paragraph{Implementation CDN to Downstream Task}\label{appendix:implementation}
In this section, we introduce the details of the implementation of the  Contrastive Desensitization Network(CDN).
Briefly, we take the Xception\cite{Xception} as the backbone, then apply the Domain Transformation module in convolutional blocks of the entry flow. After that, the transformed output feature is reconstructed through a decoder similar to the entry flow's structure.
The feature of encoder-decoder can be used in several full architectures such as RECCE\cite{RECCE} and D\cite{shi2023discrepancy}
\paragraph{Evaluation Metrics} 

\subsection{Intra-dataset Evaluation}\label{appendix:intra-dataset}

To evaluate the baseline performance of the proposed method to detect forgery face images, we conducted a series of intra-dataset experiments in which the test set is sampled from the same dataset as that used for training. We compared the proposed methods with several closely related state of the art face forgery detection methods, including RECCE\cite{RECCE}, ITA-SIA\cite{2022_ITA-SIA} and so on, by following the corresponding evaluation protocols defined over each datasets. 

Since our method performs forgery detection at the image-level and does not introduce any spatiotemporal features, we only compare the image-level with competitve method and do not include the video-level method such as RealForensics\cite{RealForensics}, AltFreezing\cite{AltFreezing} and CoReST\cite{CoReST} in this article.
\begin{figure}[h]
    \setlength{\abovecaptionskip}{0.cm}
    \setlength{\belowcaptionskip}{-0.2cm}
    \begin{minipage}{0.5\linewidth}
    \centering
    \includegraphics[width=1\linewidth]{ ./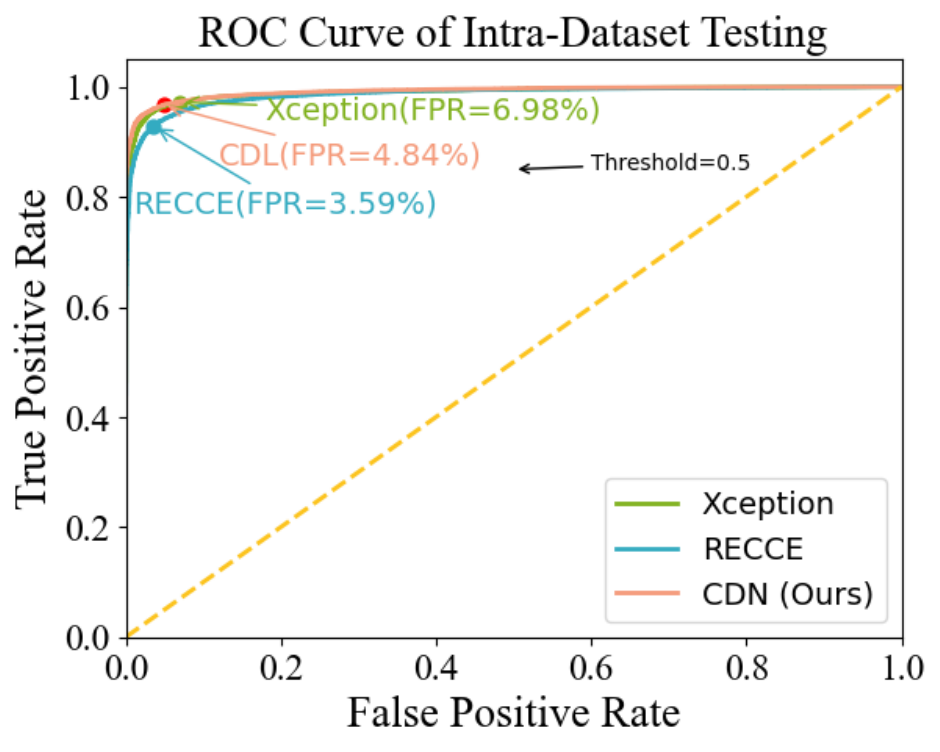} 
    \centerline{(a) Intra-Evaluation}
    \end{minipage}\hfill
    \begin{minipage}{0.5\linewidth}
    \centering
    \includegraphics[width=1\linewidth]{ ./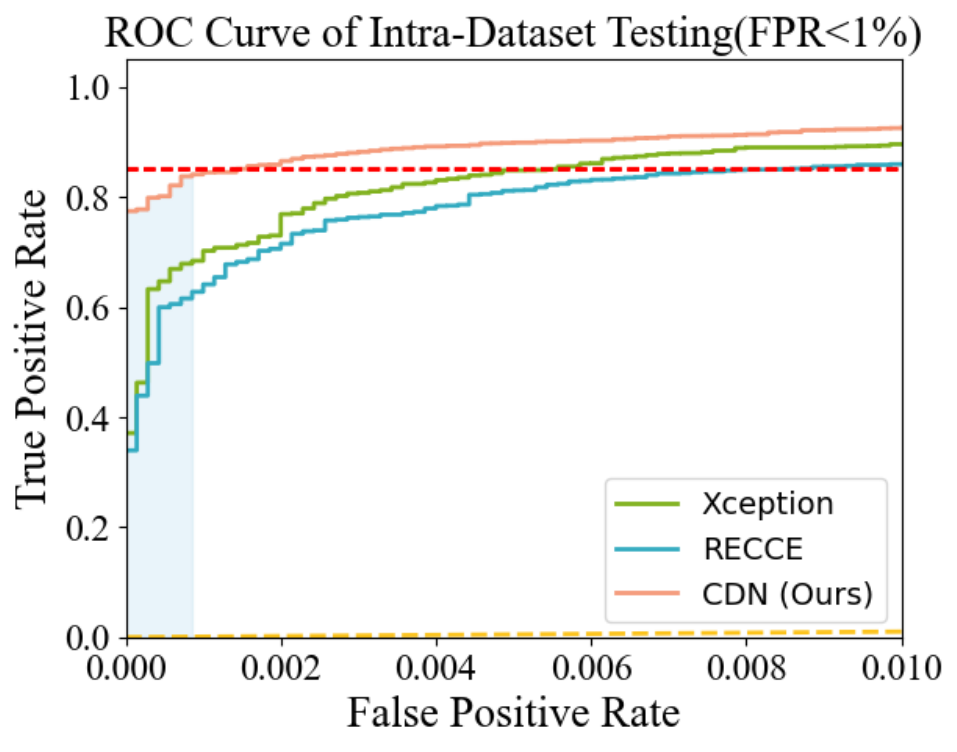} 
    \centerline{(b) Cross-Evaluation}
    \end{minipage}
    \caption{The ROC curves of the compared intra-evaluation and cross-manipulation evaluation methods.}
    \label{false_alarm_roc_intra}
\end{figure}

Table \ref{intra-dataset} gives the results. It demonstrates that the proposed CDN method is comparable or superior to several other approaches among most of the standard benchmark datasets in terms of both ACC and AUC scores, despite that in our method the face representation is learnt without using the guide of any knowledge about what forgery face images look like. In particular, on the high-quality datasets Celeb-DF and DFDC, our method outperforms the SOTA methods RECCE by 1.35\% and  5.67 \% respectively in terms of on ACC score.

Based on the public open resources available \cite{Xception}\cite{RECCE}, we made a detailed comparison with Xception\cite{Xception} and RECCE\cite{RECCE}, as both are SOTA and popular face forgery detection methods and are closely related to our method in terms of methodology. Figure \ref{false_alarm_roc_intra}(a) gives ROC curves of the compared methods. First, from the Figure \ref{false_alarm_roc_intra}(b),
We assume that the minimum TPR requirement for a detector is 85\%, which is what the red line means.
We see that under the requirement of \textit{FPR} below 0.1\%, the TPR performance of Xception and RECCE degraded significantly to 68.39\% and 62.70\%, respectively, although both of them achieve Acc score higher than 95.0\% on this dataset of FF++(c23). By contrast, the TPR of our CDN maintains 84.4\% under this setting. Furthermore, if we fix beforehand an acceptable target TPR performance (e.g., 85\%, as indicated with red line in the figure), we see that the proposed CDN method achieves much lower \textit{FPR} value (0.14\%) than both Xception (0.84\%) and RECCE (0.53\%), indicating the effectiveness of our method in reducing the false alarm while maintaining a high true forgery face detection rate.

\begin{figure}[htbp]
    \centering
    \includegraphics[width=0.8\linewidth]{ ./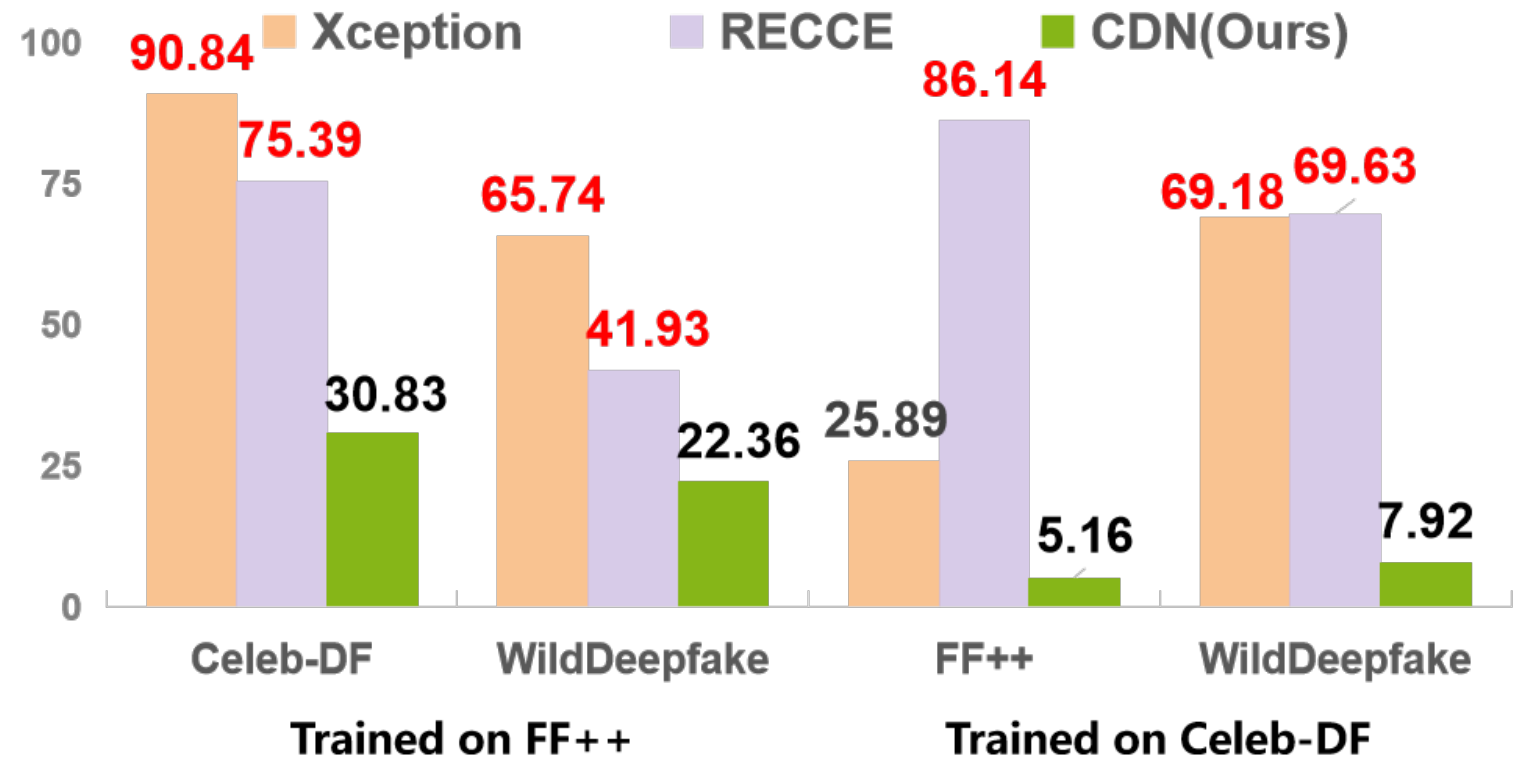}
       \caption{False Alarm Rate(FPR) ($\downarrow$) when cross-dataset testing among dataset FF++, Celeb-DF(CDF), WildDeepfake(WDF). The left two are trained on FF++, and the right two are on CDF.}
       \label{cross_dataset-FAP}
\end{figure}
\subsection{Cross-Domain Evaluation}\label{section_Cross_domain}

\paragraph{Cross-Dataset Evaluation}

To explore the generalization of our method on unseen datasets compared with recent general face forgery detection methods, we conducted a series of experiments on more challenging cross-dataset evaluation. In particular, we train our model on FF++(c40)~\cite{FF++} and test it on other three datasets: DFDC~\cite{DFDC}, Celeb-DF~\cite{Celeb-DF} and WDF~\cite{WildDeepfake}. Table \ref{cross_domain}(a) gives the results, from which one can see that the proposed CDN method consistently performs better than the compared method.  To investigate the performance of false positives, we also compare our method with RECCE~\cite{RECCE}, DisGRL~\cite{shi2023discrepancy} and Xception~\cite{Xception}. Figure \ref{cross_dataset-FAP} gives the results. It illustrates that our method outperforms the other two methods by a large margin in reducing \textit{FPR}. In particular, when testing on FF++~\cite{FF++} and WildDeepfake~\cite{WildDeepfake} (trained on Celeb-DF~\cite{Celeb-DF}), our method alleviates the \textit{FPR} by \textbf{80.98\%} and \textbf{61.71\%} respectively compared to RECCE~\cite{RECCE}. 

\begin{table*}[h]
\centering
    \setlength{\abovecaptionskip}{0.cm}
    \setlength{\belowcaptionskip}{-0.5cm}
\resizebox{0.7\linewidth}{!}{\begin{tabular}{lcccccc}
        \toprule
        Methods &\multicolumn{2}{c}{Celeb-DF} & \multicolumn{2}{c}{WildDeepfake} & \multicolumn{2}{c}{DFDC} \\
		     & AUC & EER & AUC & EER& AUC & EER \\
        \midrule
        Xception\cite{Xception} &61.80 &41.73 &62.72 &40.65 &63.61 &40.58 \\
        $F^{3}$-Net &61.51 &42.03 &57.10 &45.12 &64.60 &39.84\\
        MultiAtt\cite{MultiAtt}    &67.02 &37.90&59.74 &43.73  &68.01 &37.17\\
        Add-Net\cite{WildDeepfake} &65.29 &38.90 &62.35 &41.42 &64.78 &40.23\\
        RFM\cite{RFM} &65.63 &38.54 &57.75 &45.45 &66.01 &39.05\\
        RECCE\cite{RECCE} &68.71 &35.73 &64.31 &40.53 &69.06 &36.08\\
        \textcolor{black}{MDDE\cite{qiu2024multi}} &\textcolor{black}{68.80}&\textcolor{black}{35.68} &\textcolor{black}{70.92}&\textcolor{black}{\textbf{35.15 }}&\textcolor{black}{66.83} &\textcolor{black}{36.83}\\
      
         CDN(Ours) &\textbf{70.73}\textcolor{black}{$\pm0.6$}&\textbf{34.66}\textcolor{black}{$\pm1.8$} &\textbf{71.26}\textcolor{black}{$\pm2.1$}&\textbf{35.20 }\textcolor{black}{$\pm4.3$}&\textbf{70.21}\textcolor{black}{$\pm2.7$} &\textbf{35.08}\textcolor{black}{$\pm5.8$}\\
        \bottomrule
    \end{tabular}}
        
        \caption{Cross-dataset evaluation in terms of AUC ↑ (\%) and EER ↓ (\%), where the model is trained on FF++ (LQ) but tested on Celeb-DF, WildDeepfake, and DFDC. \textcolor{black}{The standard deviations of our method's results are calculated among 4 random seeds.}}
\end{table*}\label{cross_domain_}

\paragraph{Cross-Manipulation Evaluation}

To further evaluate the generalization among different manipulated manners, we conduct the fine-grained cross-manipulation evaluation by training the network on FF++(c40)~\cite{FF++} with fake images from one of Deepfakes (DF), Face2Face (F2F), FaceSwap (FS), and NeuralTextures (NT) while testing its performance on the remaining three datasets, of which the results are given in Table \ref{cross_manupulation_excluded1}. We observe that our CDN generally outperforms the competitors in most cases, including both intra-manipulation (diagonal of the table) results and cross-manipulation. Furthermore, Figure \ref{false_alarm_roc_intra}(b) gives the detailed ROC curves of several methods. From this one can see that our method has a higher precision (true positive rate) than the compared methods under the same false alarm rate, meanwhile, it also delivers lower false alarm rate under any given precision.

\begin{table}[h]
\centering
\small
    \setlength{\abovecaptionskip}{0.cm}
    \setlength{\belowcaptionskip}{-0.2cm}
\resizebox{\linewidth}{!}{
    \begin{tabular}{lccccccc}
        \toprule
        Methods  & Train & DF & F2F & FS & NT &C.Avg\\
        \midrule
        Xception\cite{Xception}  & &\cellcolor[gray]{0.9}99.41	&56.05	&49.93	&66.32	 &57.43\\
        RECCE\cite{RECCE}  &   & \cellcolor[gray]{0.9}99.65&70.66	&74.29	&67.34	&70.76 \\
        DisGRL\cite{shi2023discrepancy} &DF &	\cellcolor[gray]{0.9}99.67&	71.76 	&\textbf{75.21} 	&68.74	&71.90 \\
        FIC\cite{bai2024towards}	& &\cellcolor[gray]{0.9}\textbf{99.47}&77.39&69.06&68.51&71.65\\
        CDN(Ours)	& &\cellcolor[gray]{0.9}\textbf{99.65}&72.38	&71.68	&\textbf{79.51}&\textbf{74.52}\\

        \midrule
        Xception\cite{Xception}   & &68.55	&\cellcolor[gray]{0.9}98.64	&50.55	&54.81	& 57.97\\
        RECCE\cite{RECCE}  &&75.99	&\cellcolor[gray]{0.9}98.06	&64.53	&72.32	& 70.95\\
        DisGRL\cite{shi2023discrepancy}	  &F2F    &75.73  &\cellcolor[gray]{0.9}98.69 	&65.71 		&71.86&71.10  \\
         FIC\cite{bai2024towards}	&   &78.07&\cellcolor[gray]{0.9}98.27&67.58&74.01&73.22\\
          CDN(Ours)	&   &\textbf{85.86}	&\cellcolor[gray]{0.9}\textbf{98.93}	&\textbf{66.09}&74.68 &\textbf{75.54}\\
        \midrule
        Xception\cite{Xception}   &  &49.89	&54.15	&\cellcolor[gray]{0.9}98.36	&50.74&51.59	 \\
        RECCE\cite{RECCE}&  &82.39	&64.44	&\cellcolor[gray]{0.9}98.82	&56.70	 &67.84\\
        DisGRL\cite{shi2023discrepancy}	 & FS  &82.73	&64.85 	&\cellcolor[gray]{0.9}99.01 	&56.96 &	68.18 \\
        FIC\cite{bai2024towards}&  &81.47	&65.28
        &\cellcolor[gray]{0.9}98.93&60.63&69.13\\
       CDN(Ours)   &  &\textbf{84.13}	&\textbf{66.38} &\cellcolor[gray]{0.9}\textbf{99.07}&\textbf{61.07}&\textbf{70.53}\\
        \midrule
        Xception\cite{Xception}   &  &50.05	&57.49	&50.01	&\cellcolor[gray]{0.9}\textbf{99.88}	 &52.52\\
        RECCE\cite{RECCE}&  & 78.83 &80.89 &63.70 &\cellcolor[gray]{0.9}93.63&74.47\\
        DisGRL\cite{shi2023discrepancy}	& NT &80.29 	&\textbf{83.30}	&65.23	&\cellcolor[gray]{0.9}94.10 &76.27	 \\
          FIC\cite{bai2024towards}&  &83.81	&78.60	&63.88	&\cellcolor[gray]{0.9}92.42	& 75.43\\
        CDN(Ours)	&  &\textbf{88.44}	&82.72	&\textbf{65.67}	&\cellcolor[gray]{0.9}96.27	& \textbf{78.94}\\
        \bottomrule
    \end{tabular}}
    \caption{Cross-manipulation evaluation in terms of AUC (\%), where intra-domain performance shown in diagonal, four image manipulation approaches in FF++ (i.e., DeepFakes (DF), Face2Face (F2F), FaceSwap (FS), and NeuralTextures (NT)) are shown in a separate column, and the last column is the average of cross-manipulation evaluations.} 
        \label{cross_domain}
\end{table}
\paragraph{Multi-Source Manipulation Evaluation}
To investigate the generalizability of the proposed method in more realistic scenarios, where the forgery data may come from different manipulation sources, we conduct multi-source manipulation evaluation on the FF++(c40)~\cite{FF++} dataset, with the same settings as LTW~\cite{sun2021domain} and DCL~\cite{zhang2022graph}. Table \ref{cross_manupulation_excluded1} gives the results, showing that our approach outperforms them both in terms of AUC and ACC score. It is worth noting that although our approach does not incorporate graph reasoning or transformer structures like DCL\cite{zhang2022graph}, it still outperforms DCL\cite{zhang2022graph} in this evaluation, demonstrating its significant potential in the task of cross-domain forgery detection.
\begin{table}[h]
    \centering
    \scriptsize
    \begin{tabular}{lccccc}
        \toprule
         Methods& GID-DF & GID-F2F & GID-FS & GID-NT  \\
        \midrule
        MultiAtt\cite{MultiAtt} &66.8/– &56.5/– &51.7/– &56.0/– \\
        MLDG\cite{MLDG}& 67.2/73.1 &58.1/61.7 &58.1/61.7 &56.9/60.7 \\
        LTW\cite{sun2021domain}&  69.1/75.6 &65.7/72.4 &62.5/68.1 &58.5/60.8 \\
        DCL\cite{zhang2022graph}& 75.9/83.8 &67.9/75.1 &–/– &–/– \\
        CDN(Ours)&  \textbf{77.8}/\textbf{87.0} &\textbf{76.8}/\textbf{85.7} 
        &  \textbf{66.0}/\textbf{75.3}          &\textbf{67.6}/\textbf{76.7}\\
        \bottomrule
    \end{tabular}
    \caption{Multi-source evaluation results on ACC/AUC (\%).}
    \label{cross_manupulation_excluded1}
\end{table}
\subsection{\textcolor{black}{Real-World Evaluation}}\textcolor{black}{
    To validate the generalizability of our method, we constructed a new dataset by applying advanced deepfake techniques (e.g., Deepfakes\cite{DF}, Face2Face\cite{Face2Face}, SimSwap, and Diffusion models) to publicly available images of well-known individuals. This dataset simulates realistic forgery scenarios and includes a diverse range of facial manipulations. We evaluated our proposed method on this dataset and compared its performance with state-of-the-art baselines. The results are summarized in Table \ref{tab:real}
\begin{table}[htbp]
\centering
\small
\caption{Results (AUC) on the real-world scenario datasets.}
\begin{tabular}{lllll}\toprule
Method             & Ds1-df & Ds2-f2f & Ds3-sim & Ds4-dif \\ \midrule
ResNet          & 0.585  & 0.551   & 0.556   & 0.537         \\
Xception & 0.913  & 0.753   & 0.801   & 0.674         \\
MLDG               & 0.918  & 0.730   & 0.771   & 0.607         \\
CDN (ours)               & \textbf{0.936}  & \textbf{0.814}   & \textbf{0.847}   & \textbf{0.724}        \\
\bottomrule
\end{tabular}
\label{tab:real}
\end{table}
}
\subsection{\textcolor{black}{Computational Efficiency}}

\textcolor{black}{
To analyze the computational efficiency and parameter size of the model, we comparing the key indicators (Params, FLOPs, Pass Size, Params Size) of Xception, RECCE and our method (Ours), the advantages of the model and the direction of improvement are explained in detail. The specific data are as follows: }

      \begin{table}[h]
          \centering
          \begin{tabular}{c|ccccc}
          \toprule
             Model&Params  &  FLOPs&Pass Size&Params size
             \\
             \midrule
             Xception&20.809M&0.85G &74.10MB&79.38MB\\
             RECCE& 23.817M&2.27G& 111.51MB&90.86MB\\
             Ours  &23.818M&1.14G &113.76MB&90.86MB\\
               \bottomrule
          \end{tabular}
          \caption{\textcolor{black}{Comparative results of the computational efficiency and parameter size}}
          \label{tab:my_label}
      \end{table}
      
\textcolor{black}{
We evaluated the proposed CDN from three indicators: computational efficiency (FLOPs), parameter quantity (Params), and memory usage (Pass Size \& Params Size).
Among them, Xception is the backbone of the proposed model, and RECCE is the baseline of the proposed method. From the perspective of computational efficiency (FLOPs), the FLOPs of the proposed method (1.14G) is significantly lower than that of RECCE (2.27G), indicating that we have effectively reduced the computational complexity by introducing lightweight designs (such as dynamic sparse convolution and hierarchical feature reuse).}

\textcolor{black}{
From the perspective of parameter quantity (Params), the proposed method (23.818M) is close to that of RECCE (23.817M), but through parameter sharing and mixed precision training, the model avoids parameter expansion while maintaining performance.}

\textcolor{black}{
From the perspective of memory usage (Pass Size \& Params Size), the slightly higher Pass Size (113.76MB vs. 111.51MB) is due to the domain mixing mechanism, while the Params Size is consistent with RECCE (90.86MB), indicating that the storage overhead has not increased significantly.}
\subsection{Ablation Study}
\textcolor{black}{
\begin{figure*}[t]
    \centering
    \includegraphics[width=\linewidth]{./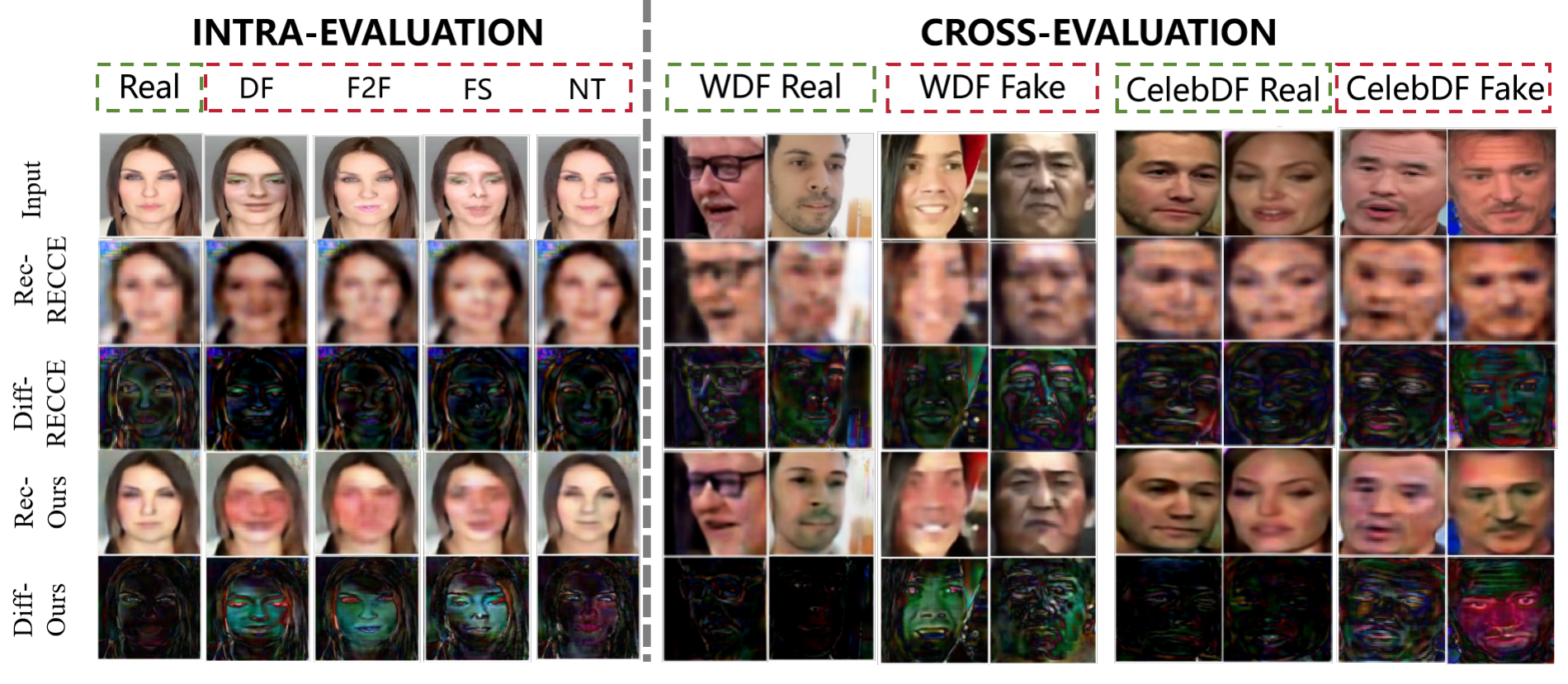}
    \caption{The representation space differences between our CDN and RECCE methods are illustrated through reconstruction and residual images on the FaceForensics++ dataset. The first row shows the original images. The second and fourth rows display the reconstructed image from the RECCE and our CDN representations, respectively, using their decoders. The third row ("Diff-RECCE") and the fifth row ("Diff-Ours") present the residual maps, which compute the pixel-level differences.
Residual maps demonstrate model performance by highlighting the distinction between forged and genuine samples. Darker areas indicate better reconstruction for genuine faces, while brighter areas signify greater divergence for forged faces, reflecting superior detection capability.}
    \label{fig:visualization}
\end{figure*}}
\paragraph{Module effects. } 
\begin{figure}[h]
    \centering
    \includegraphics[width=1\linewidth]{  ./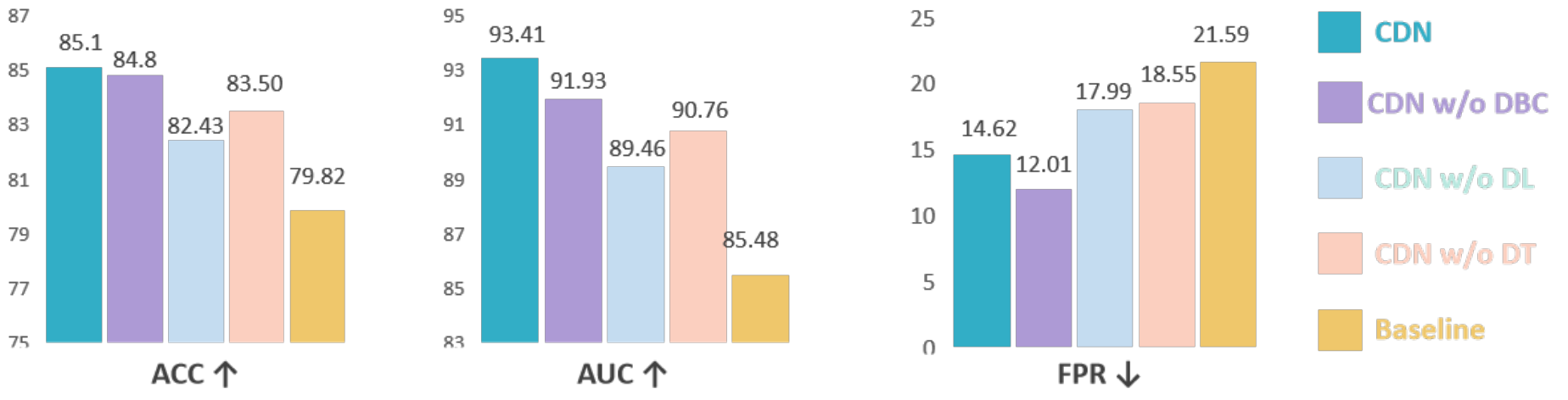}
    \caption{Ablation studies in terms of ACC (\%), AUC (\%) and FPR (\%) including Desensitization Learning (DL) and Domain Transformation (DT).}
    \label{fig:ablation}
\end{figure}

\begin{figure}[h]
    \centering
    \includegraphics[width=\linewidth]{  ./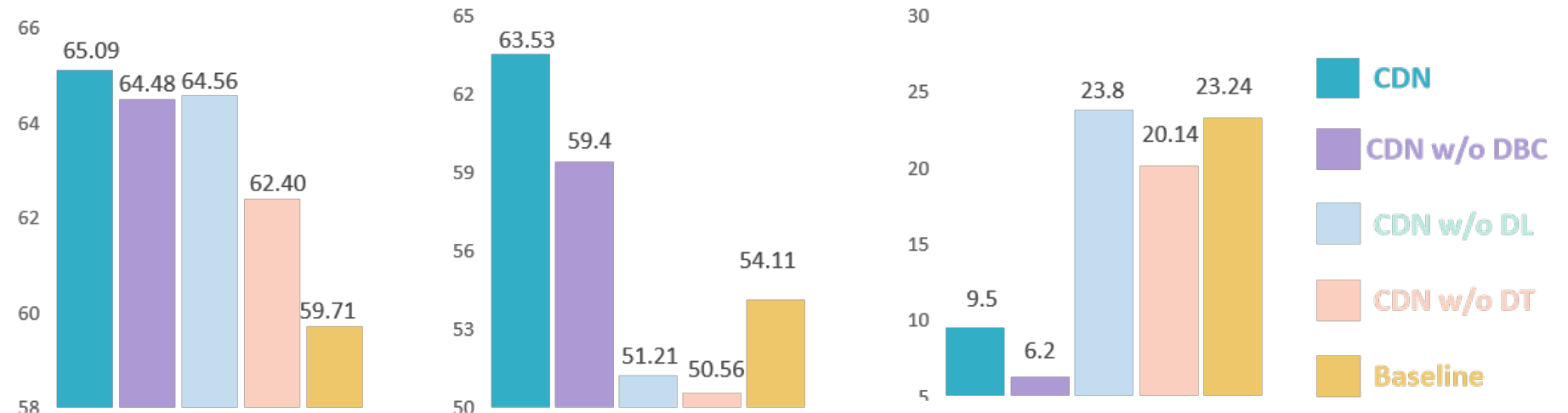}
    \caption{Ablation studies in cross-dataset setting, while testing on FF++(c23) and trained on WildDeepfake.}
    \label{fig:ablation_dbc_crossdomain}
\end{figure}
We conducted ablation experiments on WildDeepfake~\cite{WildDeepfake} dataset with two different components (i.e., Domain Transformation (DT), Desensitization Learning (DL)) removed separately to validate their contribution to the effectiveness of the proposed method under intra-dataset evaluation and cross-dataset evaluation setting. 

The intra and cross-evaluation results are given in Figure \ref{fig:ablation} and Figure \ref{fig:ablation_dbc_crossdomain}, where the baseline method (rightmost) is the CDN network without using both DT and DL components. From the figure we observe that each module is beneficial to the overall performance but it seems that Desensitization Learning (DL) is more important for the improvement of the detection accuracy compared to the DT component (see first two sub-figures), while both components are useful in reducing the False Positive Rate (see the rightmost subfigure). Indeed, the overall FPR performance will be significantly influenced if either the DT component or the DL component are removed from the whole pipeline.

In the experimental section, we conducted ablation studies on the WildDeepfake\cite{WildDeepfake} dataset to validate the impact of incorporating face-forged images on the detector's performance. The results in Table \ref{ablation_dbc} substantiate our hypothesis - Domain Boundary Constraint(DBC) module can constrain the manifold of real images by limiting the representational information of the synthesized images, thereby reducing the false negative rate (FNR). However, this approach also leads to a slight increase in the false positive rate (FPR).

Notably, even without the DBC module, our model still achieved impressive results.
\begin{table}[htbp]
    \centering
    \begin{tabular}{lcccccc}
        \toprule
        Method & ACC & AUC  &FNR &FPR\\
        \midrule
        CDN w/o DBC& 84.80 & 91.93  &20.03 &12.01\\
        CDN(Ours)& 85.21 & 93.41  &14.96 &14.62\\
        \bottomrule
    \end{tabular}
    \caption{Ablation studies of introducing fake image on intra-training in WildDeepfake\cite{WildDeepfake} ACC(\%), AUC (\%), False Negative Rate(\%) and False Positive Rate(\%)}
    \label{ablation_dbc}
\end{table}

\paragraph{Effect of Domain Boundary Constraint}
Additionally, we have integrated a  Domain Boundary Constraint (DBC) module into the architecture of our proposed CDN. Under the same experimental settings (as described in Section \ref{section_Cross_domain}) of cross-manipulation evaluation and multi-source manipulation evaluation, the experimental results shown in Tables \ref{cross_domain_dbc} and \ref{cross_manupulation_dbc} indicate that DBC can moderately improve the generalization performance of CDN, but the degree of improvement is limited. 
\begin{table}[h]
    \centering
    \small
    \setlength{\tabcolsep}{3.5pt}
    \setlength{\abovecaptionskip}{0.cm}
    \setlength{\belowcaptionskip}{0cm}
    \begin{tabular}{lccccc}
        \toprule
         Methods& GID-DF & GID-F2F & GID-FS & GID-NT  \\
        \midrule
        CDN w/o DBC&  {77.7}/{86.6} &{76.4}/{85.1}     
         &  {63.4}/{74.7}          &{66.3}/{75.4}\\
        CDN(Ours)&  {77.8}/{87.0} &{76.8}/{85.7} 
        &  {66.0}/{75.3}          &{67.6}/{76.7}\\
        \bottomrule
    \end{tabular}
    \caption{Multi-source evaluation results in terms of ACC (\%)/AUC (\%).}
    \label{cross_manupulation_dbc}
\end{table}
This suggests that the CDN model introduced in this work can achieve good generalization performance and maintain a relatively low false positive rate using only genuine image samples for training.
\begin{table}[h]
\setlength{\tabcolsep}{3pt}
    \setlength{\abovecaptionskip}{0cm}
    \setlength{\belowcaptionskip}{0cm}
    \centering
    \small
        \begin{tabular}{lcccccc}
        \toprule
        Methods  & Train & DF & F2F & FS & NT &Cross Avg.\\
        \midrule
        CDN w/o DBC	& DF&\cellcolor[gray]{0.9}99.63 &72.46	&70.78	&77.93&73.72 \\
        CDN(Ours)	& &\cellcolor[gray]{0.9}99.65 &72.38	&71.68	&79.51 &74.52\\
        \midrule
        CDN w/o DBC	& F2F&76.77	&\cellcolor[gray]{0.9}97.94	&64.71&75.92 &72.47\\
        CDN(Ours)	&   &85.86	&\cellcolor[gray]{0.9}98.93	&66.09&74.68 &75.54\\

        \midrule
        CDN w/o DBC   & FS &83.34 &66.84 &\cellcolor[gray]{0.9}98.97&60.83&70.34\\
        CDN(Ours)   &  &84.13	&66.38 &\cellcolor[gray]{0.9}99.07&61.07&70.53\\
        \midrule
        CDN w/o DBC	& NT &84.40	&81.93	&64.38	&\cellcolor[gray]{0.9}93.48&  76.90\\
        CDN(Ours)	&  &88.44	&82.72	&65.67	&\cellcolor[gray]{0.9}96.27	& 78.94\\
        \bottomrule
    \end{tabular}
    \caption{Cross-manipulation evaluation in terms of AUC (\%).}
 \label{cross_domain_dbc}
\end{table}
\paragraph{Effect of various feature layers}\label{ablation:position} 

As a desensitization learning method based on feature dimensions, our CDN can be flexibly applied to different layers during feature extraction. In our implementation, we integrate CDN into the Entry Flow of the Xception backbone, applying domain transformation across two convolutional layers. 

For notation purposes, Layer1 and Layer2 indicate that CDN is applied after the first and second convolutional layers, respectively. In this section, we conduct ablation experiments on both intra-dataset and cross-dataset evaluations using the WildDeepfake~\cite{WildDeepfake} dataset. The results, shown in Table \ref{ablation_layer}, reveal that applying feature transformation at Layer2 outperforms Layer1 in effectiveness but comes with a higher false alarm rate. Furthermore, when feature transformation is applied across the entire Entry Flow of Xception~\cite{Xception}, we achieve the highest AUC performance in both intra-dataset and cross-dataset evaluations.

\begin{table}[h]
    \centering
    \scriptsize
    \begin{tabular}{lcccccc}
        \toprule
        No. & Layer1 & Layer2  &WildDeepfake &FF++&Celeb-DF \\
        \midrule
        (a)& \checkmark &  &90.34/8.20 &48.45/3.51&56.01/41.08\\
        (b)& &\checkmark  &91.32/10.28  &59.67/4.13&65.10/34.95\\
        (c)& \checkmark&\checkmark &91.93/12.01 &61.10/22.67 &71.53/29.44\\
        \bottomrule
    \end{tabular}
    \caption{Ablation studies in terms of different layers for domain-invariant representation learning on AUC (\%) and False Positive Rate(\%)}
    \label{ablation_layer}
\end{table}

\paragraph{\textcolor{black}{Hyperparameter Sensitivity}}
\textcolor{black}{
To analyze the impact of the effect of hyperparameters $\lambda_1,\lambda_2,\lambda_3$, we evaluates different combinations of these hyperparameters and their effects on AUC and accuracy (ACC). The results are summarized in Table \ref{tab:Parameters}.
The ablation study demonstrates that the choice of hyperparameters significantly influences model performance. When $\lambda_1$ =0.1, $\lambda_2$ =0.1, $\lambda_3$=0.1, the model achieves the highest AUC (92.50) and ACC (84.82). 
We have expanded the discussion to provide insights into the sensitivity of these hyperparameters:
The hyperparameters $\lambda_1,\lambda_2,\lambda_3$ play critical roles in balancing the trade-offs between domain-invariant feature learning, domain consistency, and domain boundary constraints, respectively.
$\lambda_1$ controls domain desensitization, balancing domain invariance and feature discriminability. A low $\lambda_1$ harms generalization, while a high $\lambda_1$ risks over-suppressing domain-specific nuances. An optimal $\lambda_1$ (e.g., 0.1) enhances generalization.
$\lambda_2$ maintains domain consistency, preserving domain-specific characteristics during desensitization. A moderate $\lambda_2$ (e.g., 0.1) improves robustness to domain shifts without compromising generalization.
$\lambda_3$ enforces domain boundary constraints, preventing over-generalization. A low $\lambda_3$ increases FNR, while a high $\lambda_3$ overly constrains the feature space. An appropriate $\lambda_3$ (e.g., 0.1) mitigates over-generalization, improving performance.
Tuning $\lambda_1$, $\lambda_2$, and $\lambda_3$ ensures robust generalization and high accuracy, as demonstrated by our ablation study, offering practical insights for real-world applications.}
            \begin{table}[h]
          \centering
          \begin{tabular}{ccc|cc}
          \toprule
             $\lambda_1$ & $\lambda_2$ & $\lambda_3$  &  AUC&ACC \\
            \midrule
            0.05& 0.1&0.2&91.22&84.19\\
            0.1& 0.1&0.1&92.50&84.82\\
            0.1& 0.1&0.1&91.46&84.12\\
            0.1&0.05&0.1&91.36&84.38\\
            
            \bottomrule
          \end{tabular}
          \caption{Parameters Used in the CDN(Intra-Dataset Setting)}
          \label{tab:Parameters}
      \end{table}

\textcolor{black}{To further investigate the impact of domain transformation intensity during training, we introduced the mixing parameter $\alpha$, which controls the degree of domain transformation within a single batch. Specifically, $\alpha$ determines the proportion of samples undergoing domain transformation, thereby influencing the model's ability to learn domain-invariant features. We conducted experiments on the FS dataset for training and evaluated the model on the DF dataset. As shown in Table \ref{tab:Parameters}, $\alpha = 0.3$ yields the best performance, achieving an AUC of 84.13, accuracy (ACC) of 68.61, and a false acceptance rate (FAR) of 7.29. This optimal value balances domain transformation and feature preservation, enhancing cross-domain generalization. Lower $\alpha$ (e.g., 0.1) results in insufficient transformation (AUC: 81.01, ACC: 68.26, FAR: 22.06), while higher $\alpha$ (e.g., 0.8) overly disrupts the feature space, degrading performance (AUC: 59.12, ACC: 53.17, FAR: 13.84). These findings underscore the importance of tuning $\alpha$ for optimal domain adaptation.}

\paragraph{Visualization. }To better understand the low false positive rate behavior of the proposed CDN method, in Figure \ref{fig:visualization} we give some illustration of the results over intra and cross-dataset evaluation. It shows that the proposed method CDN can reconstruct a higher quality of images based on the learned representation, compared to RECCE~\cite{RECCE} (the 2nd row vs. 4th row). In particular, by comparing residual images in the 3rd row with those in the 5th row, we can see that our representation effectively removes the domain noise for real face images while maintaining sufficient sensitivity to the fake face images for accurate forgery detection, even to those sampled with distribution shift. 
To better illustrate the mitigation of false positive rates by CDN, we visualize the learned representation of the common reconstruction method RECCE~\cite{RECCE} and our approach using t-sne~\cite{tsne}. Our model logically projects real samples from different datasets into overlapping regions which are generally regarded as \textit{geniune}, whereas RECCE~\cite{RECCE} aggregates real samples from cross datasets into another cluster, raising the risk of false alarm.
\begin{figure}[htbp]
   \centering
   \includegraphics[width=\linewidth]{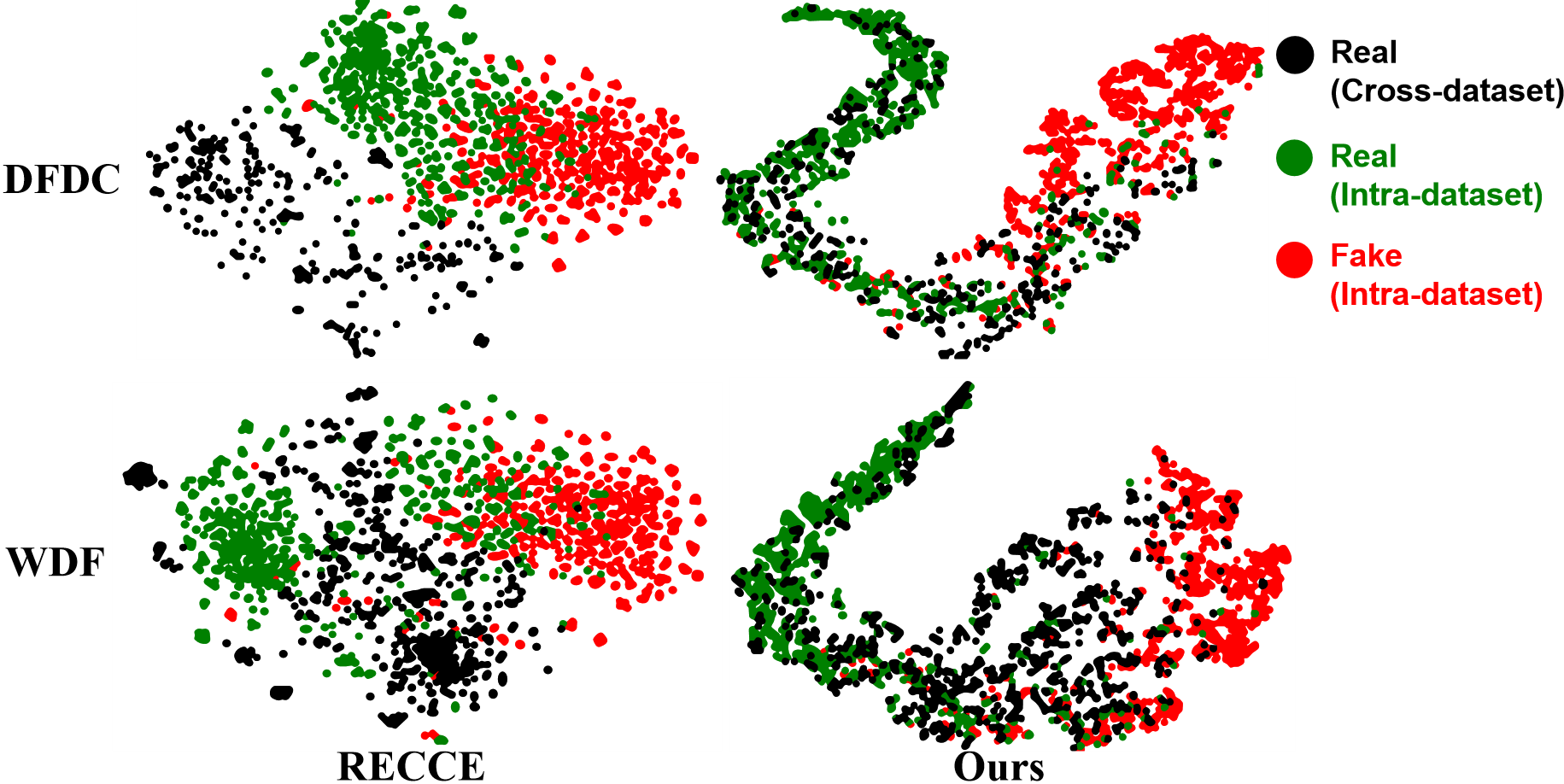}
   \caption{The t-SNE embedding visualization of the representations of RECCE and CDN(Ours). Both methods are trained on FF++(LQ) and cross-evaluation on DFDC and WildDeepfake (WDF).}
   \label{tsne}
\end{figure}

                  \begin{table}[H]
          \centering
          \begin{tabular}{c|cc|ccc}
          \toprule
             $\alpha$&Train&Test & AUC&ACC &FAR\\
            \midrule
            0.1&&&81.01&68.26&22.06\\
            0.3&FS&DF& 84.13&68.61&7.29\\
            0.8&&& 59.12&53.17&13.84\\
            
            \bottomrule
          \end{tabular}
          \caption{The degree of domain transfomation in the CDN under Cross-Manipulation Settings}
          \label{tab:degreeParameters}
      \end{table}

\textcolor{black}{ To evaluate the robustness of our method, we conducted experiments on images subjected to two types of challenging conditions: (1) For the first row of images, we applied significant compression (JPEG compression with a quality factor of 20) to simulate low-quality inputs. (2) For the second row of images, we added adversarial noise (PGD attack with $\epsilon$=8/255) before feeding them into the model.
    The results demonstrate that our method maintains strong performance even under these challenging conditions, outperforming state-of-the-art baselines. As shown in Figure \ref{fig:cam}, the activation regions are consistently focused on areas of the face where forgery traces are most evident. This indicates that our method effectively identifies and leverages key forgery-related features, even in low-quality or adversarial samples.}
      \begin{figure}[H]
    \centering
    \includegraphics[width=\linewidth]{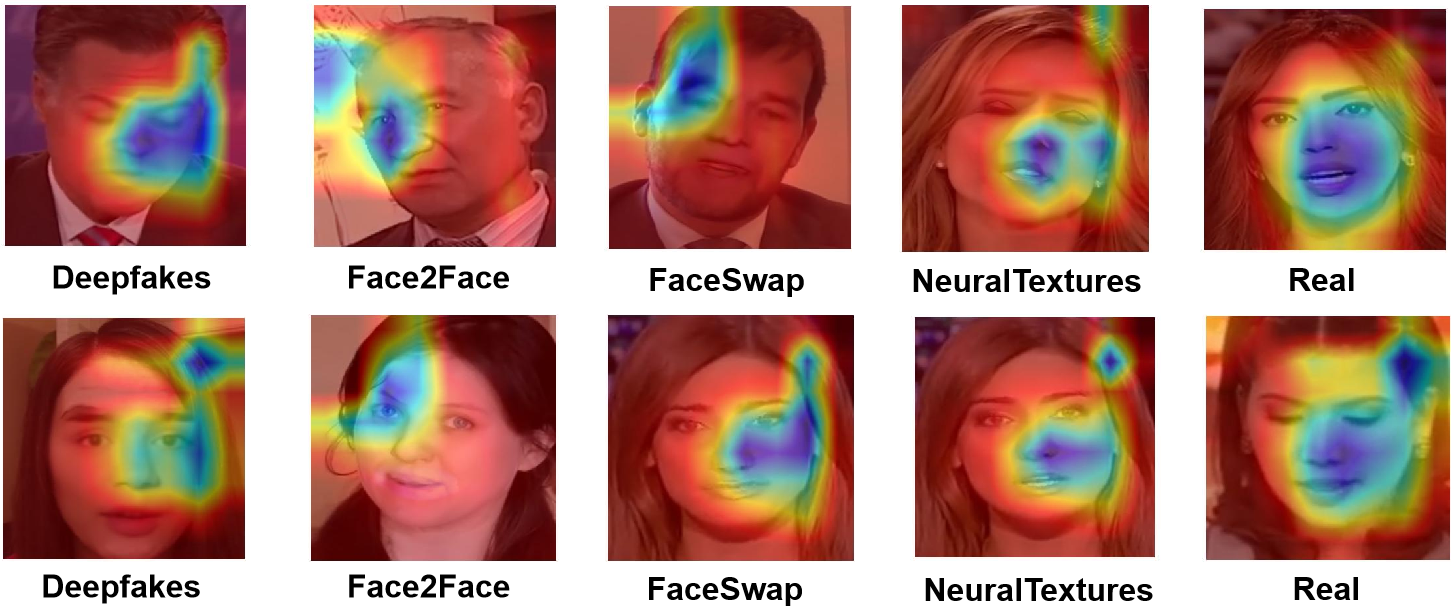}
    \caption{The GradCAM\cite{GRADCAM} visualizations of
our proposed CDN, across four forgery types on FF++(c23)}
    \label{fig:cam}
\end{figure}
\section{Conclusion}
In this work, we proposed a novel Desensitization Learning method to deal with the {\it domain shift} problem in cross-domain face forgery detection. To verify the feasibility of this idea, we implement it as a Contrastive Desensitization Network (CDN) which learns to remove the domain noise while preserving the intrinsic features of real face images. Both theoretical and experimental results demonstrate the effectiveness of the proposed method in dealing with cross-domain face forgery detection problems, although no forgery face images are used in representation learning.

Specifically, in the proposed CDN, we only use genuine faces to learn the intrinsic representation to distinguish forgery. Since we analyze the implementation approaches to the Domain Boundary Constraint (DBC) regularization to help mitigate the over-generalization issue of the distribution of real faces, it may also be excessively conservative and impact the False Alarm performance.  Therefore, in our future work, we are planning to explore alternative methods that can effectively restrict the boundaries of the real-face domain such as incorporating one-class constraints over the region occupied by real faces in the latent space.
\textcolor{black}{In the contrastive desensitization method proposed in this paper, we assume that the domain features of each domain have clear boundaries in the latent space, that is, the distance between domains is far enough, and the intrinsic features overlap sufficiently in the latent space. In reality, the limitation of the CDN method is that when the domain noise is too complex, the extracted intrinsic features and domain features may overlap too much, thus losing the discrimination performance.}
 \paragraph{Acknowledgments.}
This work is partially supported by National Science Foundation of China (6247072715).

\newpage
\bibliographystyle{cas-model2-names}
\bibliography{nn2025}

\begin{thebibliography}{59}
\expandafter\ifx\csname natexlab\endcsname\relax\def\natexlab#1{#1}\fi
\providecommand{\url}[1]{\texttt{#1}}
\providecommand{\href}[2]{#2}
\providecommand{\path}[1]{#1}
\providecommand{\DOIprefix}{doi:}
\providecommand{\ArXivprefix}{arXiv:}
\providecommand{\URLprefix}{URL: }
\providecommand{\Pubmedprefix}{pmid:}
\providecommand{\doi}[1]{\href{http://dx.doi.org/#1}{\path{#1}}}
\providecommand{\Pubmed}[1]{\href{pmid:#1}{\path{#1}}}
\providecommand{\bibinfo}[2]{#2}
\ifx\xfnm\relax \def\xfnm[#1]{\unskip,\space#1}\fi
\bibitem[{Bai et~al.(2024)Bai, Wang, Han, Hou, Wang and Pang}]{bai2024towards}
\bibinfo{author}{Bai, N.}, \bibinfo{author}{Wang, X.}, \bibinfo{author}{Han, R.}, \bibinfo{author}{Hou, J.}, \bibinfo{author}{Wang, Q.}, \bibinfo{author}{Pang, S.}, \bibinfo{year}{2024}.
\newblock \bibinfo{title}{Towards generalizable face forgery detection via mitigating spurious correlation}.
\newblock \bibinfo{journal}{Neural Networks} , \bibinfo{pages}{106909}.
\bibitem[{Bitouk et~al.(2008)Bitouk, Kumar, Dhillon, Belhumeur and Nayar}]{Synthesizing1}
\bibinfo{author}{Bitouk, D.}, \bibinfo{author}{Kumar, N.}, \bibinfo{author}{Dhillon, S.}, \bibinfo{author}{Belhumeur, P.}, \bibinfo{author}{Nayar, S.K.}, \bibinfo{year}{2008}.
\newblock \bibinfo{title}{Face swapping: automatically replacing faces in photographs}.
\newblock \bibinfo{journal}{ACM Transactions on Graphics} , \bibinfo{pages}{1–8}\URLprefix \url{http://dx.doi.org/10.1145/1360612.1360638}, \DOIprefix\doi{10.1145/1360612.1360638}.
\bibitem[{Cao et~al.(2022)Cao, Ma, Yao, Chen, Ding and Yang}]{RECCE}
\bibinfo{author}{Cao, J.}, \bibinfo{author}{Ma, C.}, \bibinfo{author}{Yao, T.}, \bibinfo{author}{Chen, S.}, \bibinfo{author}{Ding, S.}, \bibinfo{author}{Yang, X.}, \bibinfo{year}{2022}.
\newblock \bibinfo{title}{End-to-end reconstruction-classification learning for face forgery detection}, in: \bibinfo{booktitle}{Proceedings of the IEEE/CVF Conference on Computer Vision and Pattern Recognition}, pp. \bibinfo{pages}{4113--4122}.
\bibitem[{Chen and Tan(2021)}]{DomainAdaption_DANN_2021}
\bibinfo{author}{Chen, B.}, \bibinfo{author}{Tan, S.}, \bibinfo{year}{2021}.
\newblock \bibinfo{title}{Featuretransfer: Unsupervised domain adaptation for cross-domain deepfake detection}.
\newblock \bibinfo{journal}{Security and Communication Networks} \bibinfo{volume}{2021}, \bibinfo{pages}{1--8}.
\bibitem[{Chen et~al.(2022)Chen, Zhang, Song, Liu and Wang}]{self-supervised1}
\bibinfo{author}{Chen, L.}, \bibinfo{author}{Zhang, Y.}, \bibinfo{author}{Song, Y.}, \bibinfo{author}{Liu, L.}, \bibinfo{author}{Wang, J.}, \bibinfo{year}{2022}.
\newblock \bibinfo{title}{Self-supervised learning of adversarial example: Towards good generalizations for deepfake detection}, in: \bibinfo{booktitle}{2022 IEEE/CVF Conference on Computer Vision and Pattern Recognition (CVPR)}.
\newblock \URLprefix \url{http://dx.doi.org/10.1109/cvpr52688.2022.01815}, \DOIprefix\doi{10.1109/cvpr52688.2022.01815}.
\bibitem[{Chen et~al.(2012)Chen, Xu, Weinberger and Sha}]{chen2012marginalized}
\bibinfo{author}{Chen, M.}, \bibinfo{author}{Xu, Z.}, \bibinfo{author}{Weinberger, K.}, \bibinfo{author}{Sha, F.}, \bibinfo{year}{2012}.
\newblock \bibinfo{title}{Marginalized denoising autoencoders for domain adaptation}.
\newblock \bibinfo{journal}{arXiv preprint arXiv:1206.4683} .
\bibitem[{Chollet(2017)}]{Xception}
\bibinfo{author}{Chollet, F.}, \bibinfo{year}{2017}.
\newblock \bibinfo{title}{Xception: Deep learning with depthwise separable convolutions}, in: \bibinfo{booktitle}{Proceedings of the IEEE Conference on Computer Vision and Pattern Recognition (CVPR)}.
\bibitem[{Dolhansky et~al.(2019)Dolhansky, Howes, Pflaum, Baram and Ferrer}]{DFDC}
\bibinfo{author}{Dolhansky, B.}, \bibinfo{author}{Howes, R.}, \bibinfo{author}{Pflaum, B.}, \bibinfo{author}{Baram, N.}, \bibinfo{author}{Ferrer, C.C.}, \bibinfo{year}{2019}.
\newblock \bibinfo{title}{The deepfake detection challenge (dfdc) preview dataset}.
\newblock \href{http://arxiv.org/abs/1910.08854}{\tt arXiv:1910.08854}.
\bibitem[{Guo et~al.(2023)Guo, Zhen and Yan}]{Guo_2023_ICCV_DA}
\bibinfo{author}{Guo, Y.}, \bibinfo{author}{Zhen, C.}, \bibinfo{author}{Yan, P.}, \bibinfo{year}{2023}.
\newblock \bibinfo{title}{Controllable guide-space for generalizable face forgery detection}, in: \bibinfo{booktitle}{Proceedings of the IEEE/CVF International Conference on Computer Vision (ICCV)}, pp. \bibinfo{pages}{20818--20827}.
\bibitem[{Haliassos et~al.(2022)Haliassos, Mira, Petridis and Pantic}]{RealForensics}
\bibinfo{author}{Haliassos, A.}, \bibinfo{author}{Mira, R.}, \bibinfo{author}{Petridis, S.}, \bibinfo{author}{Pantic, M.}, \bibinfo{year}{2022}.
\newblock \bibinfo{title}{Leveraging real talking faces via self-supervision for robust forgery detection}, in: \bibinfo{booktitle}{Proceedings of the IEEE/CVF Conference on Computer Vision and Pattern Recognition}, pp. \bibinfo{pages}{14950--14962}.
\bibitem[{He et~al.(2020)He, Fan, Wu, Xie and Girshick}]{he2020momentum}
\bibinfo{author}{He, K.}, \bibinfo{author}{Fan, H.}, \bibinfo{author}{Wu, Y.}, \bibinfo{author}{Xie, S.}, \bibinfo{author}{Girshick, R.}, \bibinfo{year}{2020}.
\newblock \bibinfo{title}{Momentum contrast for unsupervised visual representation learning}, in: \bibinfo{booktitle}{Proceedings of the IEEE/CVF conference on computer vision and pattern recognition}, pp. \bibinfo{pages}{9729--9738}.
\bibitem[{He et~al.(2024)He, Peng, Liu, Wang and Gao}]{he2024gazeforensics}
\bibinfo{author}{He, Q.}, \bibinfo{author}{Peng, C.}, \bibinfo{author}{Liu, D.}, \bibinfo{author}{Wang, N.}, \bibinfo{author}{Gao, X.}, \bibinfo{year}{2024}.
\newblock \bibinfo{title}{Gazeforensics: Deepfake detection via gaze-guided spatial inconsistency learning}.
\newblock \bibinfo{journal}{Neural Networks} \bibinfo{volume}{180}, \bibinfo{pages}{106636}.
\bibitem[{He et~al.(2021)He, Yu, Keuper and Fritz}]{super-resolution}
\bibinfo{author}{He, Y.}, \bibinfo{author}{Yu, N.}, \bibinfo{author}{Keuper, M.}, \bibinfo{author}{Fritz, M.}, \bibinfo{year}{2021}.
\newblock \bibinfo{title}{Beyond the spectrum: Detecting deepfakes via re-synthesis}, in: \bibinfo{booktitle}{Proceedings of the Thirtieth International Joint Conference on Artificial Intelligence}.
\newblock \URLprefix \url{http://dx.doi.org/10.24963/ijcai.2021/349}, \DOIprefix\doi{10.24963/ijcai.2021/349}.
\bibitem[{Hinton and Roweis(2002)}]{tsne}
\bibinfo{author}{Hinton, G.E.}, \bibinfo{author}{Roweis, S.T.}, \bibinfo{year}{2002}.
\newblock \bibinfo{title}{Stochastic neighbor embedding}, in: \bibinfo{editor}{Becker, S.}, \bibinfo{editor}{Thrun, S.}, \bibinfo{editor}{Obermayer, K.} (Eds.), \bibinfo{booktitle}{Advances in Neural Information Processing Systems 15 [Neural Information Processing Systems, {NIPS} 2002, December 9-14, 2002, Vancouver, British Columbia, Canada]}, \bibinfo{publisher}{{MIT} Press}. pp. \bibinfo{pages}{833--840}.
\newblock \URLprefix \url{https://proceedings.neurips.cc/paper/2002/hash/6150ccc6069bea6b5716254057a194ef-Abstract.html}.
\bibitem[{Hoffman(2013)}]{2013Efficient}
\bibinfo{author}{Hoffman, J.}, \bibinfo{year}{2013}.
\newblock \bibinfo{title}{Efficient learning of domain-invariant image representations}.
\newblock \bibinfo{journal}{Computer Science} .
\bibitem[{Huang and Belongie(2017)}]{huang2017arbitrary}
\bibinfo{author}{Huang, X.}, \bibinfo{author}{Belongie, S.}, \bibinfo{year}{2017}.
\newblock \bibinfo{title}{Arbitrary style transfer in real-time with adaptive instance normalization}, in: \bibinfo{booktitle}{Proceedings of the IEEE international conference on computer vision}, pp. \bibinfo{pages}{1501--1510}.
\bibitem[{Ke and Wang(2023)}]{ke2023df}
\bibinfo{author}{Ke, J.}, \bibinfo{author}{Wang, L.}, \bibinfo{year}{2023}.
\newblock \bibinfo{title}{Df-udetector: An effective method towards robust deepfake detection via feature restoration}.
\newblock \bibinfo{journal}{Neural Networks} \bibinfo{volume}{160}, \bibinfo{pages}{216--226}.
\bibitem[{Kingma and Ba(2014)}]{kingma2014adam}
\bibinfo{author}{Kingma, D.P.}, \bibinfo{author}{Ba, J.}, \bibinfo{year}{2014}.
\newblock \bibinfo{title}{Adam: A method for stochastic optimization}.
\newblock \bibinfo{journal}{arXiv preprint arXiv:1412.6980} .
\bibitem[{Li et~al.(2018)Li, Yang, Song and Hospedales}]{MLDG}
\bibinfo{author}{Li, D.}, \bibinfo{author}{Yang, Y.}, \bibinfo{author}{Song, Y.Z.}, \bibinfo{author}{Hospedales, T.}, \bibinfo{year}{2018}.
\newblock \bibinfo{title}{Learning to generalize: Meta-learning for domain generalization}, in: \bibinfo{booktitle}{Proceedings of the AAAI conference on artificial intelligence}.
\bibitem[{Li et~al.(2020a)Li, Wang, Wan and Kot}]{2020GMFAD}
\bibinfo{author}{Li, H.}, \bibinfo{author}{Wang, S.}, \bibinfo{author}{Wan, R.}, \bibinfo{author}{Kot, A.C.}, \bibinfo{year}{2020}a.
\newblock \bibinfo{title}{Gmfad: Towards generalized visual recognition via multi-layer feature alignment and disentanglement}.
\newblock \bibinfo{journal}{IEEE Transactions on Pattern Analysis and Machine Intelligence} \bibinfo{volume}{PP}.
\bibitem[{Li et~al.(2024)Li, Li, Tan and Liu}]{li2024takes}
\bibinfo{author}{Li, J.}, \bibinfo{author}{Li, Y.}, \bibinfo{author}{Tan, J.}, \bibinfo{author}{Liu, C.}, \bibinfo{year}{2024}.
\newblock \bibinfo{title}{It takes two: Dual branch augmentation module for domain generalization}.
\newblock \bibinfo{journal}{Neural Networks} \bibinfo{volume}{172}, \bibinfo{pages}{106094}.
\bibitem[{Li et~al.(2021)Li, Xie, Li, Wang and Zhang}]{DFDC_Frequency2}
\bibinfo{author}{Li, J.}, \bibinfo{author}{Xie, H.}, \bibinfo{author}{Li, J.}, \bibinfo{author}{Wang, Z.}, \bibinfo{author}{Zhang, Y.}, \bibinfo{year}{2021}.
\newblock \bibinfo{title}{Frequency-aware discriminative feature learning supervised by single-center loss for face forgery detection}, in: \bibinfo{booktitle}{2021 IEEE/CVF Conference on Computer Vision and Pattern Recognition (CVPR)}.
\newblock \URLprefix \url{http://dx.doi.org/10.1109/cvpr46437.2021.00639}, \DOIprefix\doi{10.1109/cvpr46437.2021.00639}.
\bibitem[{Li et~al.(2017)Li, Wang, Liu and Hou}]{li2017demystifying}
\bibinfo{author}{Li, Y.}, \bibinfo{author}{Wang, N.}, \bibinfo{author}{Liu, J.}, \bibinfo{author}{Hou, X.}, \bibinfo{year}{2017}.
\newblock \bibinfo{title}{Demystifying neural style transfer}.
\newblock \bibinfo{journal}{arXiv preprint arXiv:1701.01036} .
\bibitem[{Li et~al.(2020b)Li, Yang, Sun, Qi and Lyu}]{Celeb-DF}
\bibinfo{author}{Li, Y.}, \bibinfo{author}{Yang, X.}, \bibinfo{author}{Sun, P.}, \bibinfo{author}{Qi, H.}, \bibinfo{author}{Lyu, S.}, \bibinfo{year}{2020}b.
\newblock \bibinfo{title}{Celeb-df: A large-scale challenging dataset for deepfake forensics}, in: \bibinfo{booktitle}{Proceedings of the IEEE/CVF Conference on Computer Vision and Pattern Recognition (CVPR)}.
\bibitem[{Liang et~al.(2023)Liang, Wang, Huang, Zou, Wang and Liang}]{liang2023depth}
\bibinfo{author}{Liang, B.}, \bibinfo{author}{Wang, Z.}, \bibinfo{author}{Huang, B.}, \bibinfo{author}{Zou, Q.}, \bibinfo{author}{Wang, Q.}, \bibinfo{author}{Liang, J.}, \bibinfo{year}{2023}.
\newblock \bibinfo{title}{Depth map guided triplet network for deepfake face detection}.
\newblock \bibinfo{journal}{Neural Networks} \bibinfo{volume}{159}, \bibinfo{pages}{34--42}.
\bibitem[{Liu et~al.(2021)Liu, Li, Zhou, Chen, He, Xue, Zhang and Yu}]{DFDC_Frequency3}
\bibinfo{author}{Liu, H.}, \bibinfo{author}{Li, X.}, \bibinfo{author}{Zhou, W.}, \bibinfo{author}{Chen, Y.}, \bibinfo{author}{He, Y.}, \bibinfo{author}{Xue, H.}, \bibinfo{author}{Zhang, W.}, \bibinfo{author}{Yu, N.}, \bibinfo{year}{2021}.
\newblock \bibinfo{title}{Spatial-phase shallow learning: rethinking face forgery detection in frequency domain}, in: \bibinfo{booktitle}{Proceedings of the IEEE/CVF conference on computer vision and pattern recognition}, pp. \bibinfo{pages}{772--781}.
\bibitem[{Luo et~al.(2021)Luo, Zhang, Yan and Liu}]{luo2021generalizing}
\bibinfo{author}{Luo, Y.}, \bibinfo{author}{Zhang, Y.}, \bibinfo{author}{Yan, J.}, \bibinfo{author}{Liu, W.}, \bibinfo{year}{2021}.
\newblock \bibinfo{title}{Generalizing face forgery detection with high-frequency features}, in: \bibinfo{booktitle}{Proceedings of the IEEE/CVF conference on computer vision and pattern recognition}, pp. \bibinfo{pages}{16317--16326}.
\bibitem[{Lv et~al.(2024)Lv, Li, Dong, Chen, Yu, Zhou and Zhang}]{lv2024domainforensics}
\bibinfo{author}{Lv, Q.}, \bibinfo{author}{Li, Y.}, \bibinfo{author}{Dong, J.}, \bibinfo{author}{Chen, S.}, \bibinfo{author}{Yu, H.}, \bibinfo{author}{Zhou, H.}, \bibinfo{author}{Zhang, S.}, \bibinfo{year}{2024}.
\newblock \bibinfo{title}{Domainforensics: Exposing face forgery across domains via bi-directional adaptation}.
\newblock \bibinfo{journal}{IEEE Transactions on Information Forensics and Security} .
\bibitem[{Lyu(2020)}]{lyu2020deepfake}
\bibinfo{author}{Lyu, S.}, \bibinfo{year}{2020}.
\newblock \bibinfo{title}{Deepfake detection: Current challenges and next steps}, in: \bibinfo{booktitle}{2020 IEEE international conference on multimedia \& expo workshops (ICMEW)}, \bibinfo{organization}{IEEE}. pp. \bibinfo{pages}{1--6}.
\bibitem[{MarekKowalski(2018)}]{FS}
\bibinfo{author}{MarekKowalski}, \bibinfo{year}{2018}.
\newblock \bibinfo{title}{Faceswap}.
\newblock \bibinfo{howpublished}{\url{https://github.com/MarekKowalski/FaceSwap}}.
\newblock \bibinfo{note}{Accessed:}.
\bibitem[{Peng et~al.(2024)Peng, Sun, Liu, Wang and Gao}]{peng2024local}
\bibinfo{author}{Peng, C.}, \bibinfo{author}{Sun, F.}, \bibinfo{author}{Liu, D.}, \bibinfo{author}{Wang, N.}, \bibinfo{author}{Gao, X.}, \bibinfo{year}{2024}.
\newblock \bibinfo{title}{Local artifacts amplification for deepfakes augmentation}.
\newblock \bibinfo{journal}{Neural Networks} \bibinfo{volume}{180}, \bibinfo{pages}{106692}.
\bibitem[{Qian et~al.(2020)Qian, Yin, Sheng, Chen and Shao}]{DFDC_Frequency1}
\bibinfo{author}{Qian, Y.}, \bibinfo{author}{Yin, G.}, \bibinfo{author}{Sheng, L.}, \bibinfo{author}{Chen, Z.}, \bibinfo{author}{Shao, J.}, \bibinfo{year}{2020}.
\newblock \bibinfo{title}{Thinking in frequency: Face forgery detection by mining frequency-aware clues}, in: \bibinfo{booktitle}{European conference on computer vision}, \bibinfo{organization}{Springer}. pp. \bibinfo{pages}{86--103}.
\bibitem[{Qiu et~al.(2024)Qiu, Jiang, Liu and Tan}]{qiu2024multi}
\bibinfo{author}{Qiu, L.}, \bibinfo{author}{Jiang, K.}, \bibinfo{author}{Liu, S.}, \bibinfo{author}{Tan, X.}, \bibinfo{year}{2024}.
\newblock \bibinfo{title}{Multi-level distributional discrepancy enhancement for cross domain face forgery detection}, in: \bibinfo{booktitle}{Chinese Conference on Pattern Recognition and Computer Vision (PRCV)}, \bibinfo{organization}{Springer}. pp. \bibinfo{pages}{508--522}.
\bibitem[{Rossler et~al.(2019)Rossler, Cozzolino, Verdoliva, Riess, Thies and Niessner}]{FF++}
\bibinfo{author}{Rossler, A.}, \bibinfo{author}{Cozzolino, D.}, \bibinfo{author}{Verdoliva, L.}, \bibinfo{author}{Riess, C.}, \bibinfo{author}{Thies, J.}, \bibinfo{author}{Niessner, M.}, \bibinfo{year}{2019}.
\newblock \bibinfo{title}{Faceforensics++: Learning to detect manipulated facial images}, in: \bibinfo{booktitle}{Proceedings of the IEEE/CVF International Conference on Computer Vision (ICCV)}.
\bibitem[{Selvaraju et~al.(2017)Selvaraju, Cogswell, Das, Vedantam, Parikh and Batra}]{GRADCAM}
\bibinfo{author}{Selvaraju, R.R.}, \bibinfo{author}{Cogswell, M.}, \bibinfo{author}{Das, A.}, \bibinfo{author}{Vedantam, R.}, \bibinfo{author}{Parikh, D.}, \bibinfo{author}{Batra, D.}, \bibinfo{year}{2017}.
\newblock \bibinfo{title}{Grad-cam: Visual explanations from deep networks via gradient-based localization}, in: \bibinfo{booktitle}{Proceedings of the IEEE international conference on computer vision}, pp. \bibinfo{pages}{618--626}.
\bibitem[{Shi et~al.(2023a)Shi, Zhang and Shan}]{shi2023real}
\bibinfo{author}{Shi, L.}, \bibinfo{author}{Zhang, J.}, \bibinfo{author}{Shan, S.}, \bibinfo{year}{2023}a.
\newblock \bibinfo{title}{Real face foundation representation learning for generalized deepfake detection}.
\newblock \bibinfo{journal}{arXiv preprint arXiv:2303.08439} .
\bibitem[{Shi et~al.(2023b)Shi, Chen, Chen and Zhang}]{shi2023discrepancy}
\bibinfo{author}{Shi, Z.}, \bibinfo{author}{Chen, H.}, \bibinfo{author}{Chen, L.}, \bibinfo{author}{Zhang, D.}, \bibinfo{year}{2023}b.
\newblock \bibinfo{title}{Discrepancy-guided reconstruction learning for image forgery detection}, in: \bibinfo{booktitle}{Proceedings of the Thirty-Second International Joint Conference on Artificial Intelligence}, pp. \bibinfo{pages}{1387--1395}.
\bibitem[{Shiohara and Yamasaki(2022)}]{SBI}
\bibinfo{author}{Shiohara, K.}, \bibinfo{author}{Yamasaki, T.}, \bibinfo{year}{2022}.
\newblock \bibinfo{title}{Detecting deepfakes with self-blended images}, in: \bibinfo{booktitle}{Proceedings of the IEEE/CVF Conference on Computer Vision and Pattern Recognition}, pp. \bibinfo{pages}{18720--18729}.
\bibitem[{Shuai et~al.(2023)Shuai, Zhong, Wu, Lin, Wang, Ba, Liu, Cavallaro and Ren}]{shuai2023locate}
\bibinfo{author}{Shuai, C.}, \bibinfo{author}{Zhong, J.}, \bibinfo{author}{Wu, S.}, \bibinfo{author}{Lin, F.}, \bibinfo{author}{Wang, Z.}, \bibinfo{author}{Ba, Z.}, \bibinfo{author}{Liu, Z.}, \bibinfo{author}{Cavallaro, L.}, \bibinfo{author}{Ren, K.}, \bibinfo{year}{2023}.
\newblock \bibinfo{title}{Locate and verify: A two-stream network for improved deepfake detection}, in: \bibinfo{booktitle}{Proceedings of the 31st ACM International Conference on Multimedia}, pp. \bibinfo{pages}{7131--7142}.
\bibitem[{Song et~al.(2022)Song, Fang, Li, Dong, Jin, Chen and Lyu}]{song2022adaptive}
\bibinfo{author}{Song, L.}, \bibinfo{author}{Fang, Z.}, \bibinfo{author}{Li, X.}, \bibinfo{author}{Dong, X.}, \bibinfo{author}{Jin, Z.}, \bibinfo{author}{Chen, Y.}, \bibinfo{author}{Lyu, S.}, \bibinfo{year}{2022}.
\newblock \bibinfo{title}{Adaptive face forgery detection in cross domain}, in: \bibinfo{booktitle}{European conference on computer vision}, \bibinfo{organization}{Springer}. pp. \bibinfo{pages}{467--484}.
\bibitem[{Sun et~al.(2022)Sun, Liu, Yao, Sun, Chen, Ding and Ji}]{2022_ITA-SIA}
\bibinfo{author}{Sun, K.}, \bibinfo{author}{Liu, H.}, \bibinfo{author}{Yao, T.}, \bibinfo{author}{Sun, X.}, \bibinfo{author}{Chen, S.}, \bibinfo{author}{Ding, S.}, \bibinfo{author}{Ji, R.}, \bibinfo{year}{2022}.
\newblock \bibinfo{title}{An information theoretic approach for attention-driven face forgery detection}, in: \bibinfo{booktitle}{European Conference on Computer Vision}, \bibinfo{organization}{Springer}. pp. \bibinfo{pages}{111--127}.
\bibitem[{Sun et~al.(2021)Sun, Liu, Ye, Gao, Liu, Shao and Ji}]{sun2021domain}
\bibinfo{author}{Sun, K.}, \bibinfo{author}{Liu, H.}, \bibinfo{author}{Ye, Q.}, \bibinfo{author}{Gao, Y.}, \bibinfo{author}{Liu, J.}, \bibinfo{author}{Shao, L.}, \bibinfo{author}{Ji, R.}, \bibinfo{year}{2021}.
\newblock \bibinfo{title}{Domain general face forgery detection by learning to weight}, in: \bibinfo{booktitle}{Proceedings of the AAAI conference on artificial intelligence}, pp. \bibinfo{pages}{2638--2646}.
\bibitem[{Suwajanakorn et~al.(2017)Suwajanakorn, Seitz and Kemelmacher-Shlizerman}]{Synthesizing2}
\bibinfo{author}{Suwajanakorn, S.}, \bibinfo{author}{Seitz, S.M.}, \bibinfo{author}{Kemelmacher-Shlizerman, I.}, \bibinfo{year}{2017}.
\newblock \bibinfo{title}{Synthesizing obama: learning lip sync from audio}.
\newblock \bibinfo{journal}{ACM Transactions on Graphics} , \bibinfo{pages}{1–13}\URLprefix \url{http://dx.doi.org/10.1145/3072959.3073640}, \DOIprefix\doi{10.1145/3072959.3073640}.
\bibitem[{Thies et~al.(2016)Thies, Zollhofer, Stamminger, Theobalt and Nie{\ss}ner}]{Face2Face}
\bibinfo{author}{Thies, J.}, \bibinfo{author}{Zollhofer, M.}, \bibinfo{author}{Stamminger, M.}, \bibinfo{author}{Theobalt, C.}, \bibinfo{author}{Nie{\ss}ner, M.}, \bibinfo{year}{2016}.
\newblock \bibinfo{title}{Face2face: Real-time face capture and reenactment of rgb videos}, in: \bibinfo{booktitle}{Proceedings of the IEEE conference on computer vision and pattern recognition}, pp. \bibinfo{pages}{2387--2395}.
\bibitem[{Thies et~al.(2019)Thies, Zollhöfer and Nießner}]{NeuralTextures}
\bibinfo{author}{Thies, J.}, \bibinfo{author}{Zollhöfer, M.}, \bibinfo{author}{Nießner, M.}, \bibinfo{year}{2019}.
\newblock \bibinfo{title}{Deferred neural rendering: Image synthesis using neural textures}.
\newblock \bibinfo{journal}{arXiv: Computer Vision and Pattern Recognition,arXiv: Computer Vision and Pattern Recognition} .
\bibitem[{Tolosana et~al.(2020)Tolosana, Vera-Rodriguez, Fierrez, Morales and Ortega-Garcia}]{tolosana2020deepfakes}
\bibinfo{author}{Tolosana, R.}, \bibinfo{author}{Vera-Rodriguez, R.}, \bibinfo{author}{Fierrez, J.}, \bibinfo{author}{Morales, A.}, \bibinfo{author}{Ortega-Garcia, J.}, \bibinfo{year}{2020}.
\newblock \bibinfo{title}{Deepfakes and beyond: A survey of face manipulation and fake detection}.
\newblock \bibinfo{journal}{Information Fusion} \bibinfo{volume}{64}, \bibinfo{pages}{131--148}.
\bibitem[{torzdf(2018)}]{DF}
\bibinfo{author}{torzdf}, \bibinfo{year}{2018}.
\newblock \bibinfo{title}{Deepfakes}.
\newblock \bibinfo{howpublished}{\url{https://github.com/deepfakes/faceswap}}.
\newblock \bibinfo{note}{Accessed:}.
\bibitem[{Ulyanov et~al.(2016)Ulyanov, Lebedev, Vedaldi and Lempitsky}]{ulyanov2016texture}
\bibinfo{author}{Ulyanov, D.}, \bibinfo{author}{Lebedev, V.}, \bibinfo{author}{Vedaldi, A.}, \bibinfo{author}{Lempitsky, V.}, \bibinfo{year}{2016}.
\newblock \bibinfo{title}{Texture networks: Feed-forward synthesis of textures and stylized images}.
\newblock \bibinfo{journal}{arXiv preprint arXiv:1603.03417} .
\bibitem[{Ulyanov et~al.(2017)Ulyanov, Vedaldi and Lempitsky}]{ulyanov2017improved}
\bibinfo{author}{Ulyanov, D.}, \bibinfo{author}{Vedaldi, A.}, \bibinfo{author}{Lempitsky, V.}, \bibinfo{year}{2017}.
\newblock \bibinfo{title}{Improved texture networks: Maximizing quality and diversity in feed-forward stylization and texture synthesis}, in: \bibinfo{booktitle}{Proceedings of the IEEE conference on computer vision and pattern recognition}, pp. \bibinfo{pages}{6924--6932}.
\bibitem[{Wang and Deng(2021)}]{RFM}
\bibinfo{author}{Wang, C.}, \bibinfo{author}{Deng, W.}, \bibinfo{year}{2021}.
\newblock \bibinfo{title}{Representative forgery mining for fake face detection}, in: \bibinfo{booktitle}{Proceedings of the IEEE/CVF conference on computer vision and pattern recognition}, pp. \bibinfo{pages}{14923--14932}.
\bibitem[{Wang et~al.(2023)Wang, Bao, Zhou, Wang and Li}]{AltFreezing}
\bibinfo{author}{Wang, Z.}, \bibinfo{author}{Bao, J.}, \bibinfo{author}{Zhou, W.}, \bibinfo{author}{Wang, W.}, \bibinfo{author}{Li, H.}, \bibinfo{year}{2023}.
\newblock \bibinfo{title}{Altfreezing for more general video face forgery detection}, in: \bibinfo{booktitle}{Proceedings of the IEEE/CVF conference on computer vision and pattern recognition}, pp. \bibinfo{pages}{4129--4138}.
\bibitem[{Wu et~al.(2018)Wu, Zhang, Li, Qian and Loy}]{Synthesizing3}
\bibinfo{author}{Wu, W.}, \bibinfo{author}{Zhang, Y.}, \bibinfo{author}{Li, C.}, \bibinfo{author}{Qian, C.}, \bibinfo{author}{Loy, C.C.}, \bibinfo{year}{2018}.
\newblock \bibinfo{title}{Reenactgan: Learning to reenact faces via boundary transfer}, in: \bibinfo{booktitle}{Proceedings of the European conference on computer vision (ECCV)}, pp. \bibinfo{pages}{603--619}.
\bibitem[{Yu et~al.(2023)Yu, Ni, Yang, Zhao and Kot}]{DomainAdapation_TMM2023}
\bibinfo{author}{Yu, Y.}, \bibinfo{author}{Ni, R.}, \bibinfo{author}{Yang, S.}, \bibinfo{author}{Zhao, Y.}, \bibinfo{author}{Kot, A.C.}, \bibinfo{year}{2023}.
\newblock \bibinfo{title}{Narrowing domain gaps with bridging samples for generalized face forgery detection}.
\newblock \bibinfo{journal}{IEEE Transactions on Multimedia} , \bibinfo{pages}{1--13}\DOIprefix\doi{10.1109/TMM.2023.3310341}.
\bibitem[{Zhang et~al.(2022)Zhang, Tang and Cheng}]{zhang2022graph}
\bibinfo{author}{Zhang, D.}, \bibinfo{author}{Tang, J.}, \bibinfo{author}{Cheng, K.T.}, \bibinfo{year}{2022}.
\newblock \bibinfo{title}{Graph reasoning transformer for image parsing}, in: \bibinfo{booktitle}{Proceedings of the 30th ACM International Conference on Multimedia}, pp. \bibinfo{pages}{2380--2389}.
\bibitem[{Zhang et~al.(2023)Zhang, Xiao, Li and Ge}]{CoReST}
\bibinfo{author}{Zhang, D.}, \bibinfo{author}{Xiao, Z.}, \bibinfo{author}{Li, J.}, \bibinfo{author}{Ge, S.}, \bibinfo{year}{2023}.
\newblock \bibinfo{title}{Self-supervised transformer with domain adaptive reconstruction for general face forgery video detection}.
\newblock \bibinfo{journal}{arXiv preprint arXiv:2309.04795} .
\bibitem[{Zhao et~al.(2021)Zhao, Zhou, Chen, Wei, Zhang and Yu}]{MultiAtt}
\bibinfo{author}{Zhao, H.}, \bibinfo{author}{Zhou, W.}, \bibinfo{author}{Chen, D.}, \bibinfo{author}{Wei, T.}, \bibinfo{author}{Zhang, W.}, \bibinfo{author}{Yu, N.}, \bibinfo{year}{2021}.
\newblock \bibinfo{title}{Multi-attentional deepfake detection}, in: \bibinfo{booktitle}{Proceedings of the IEEE/CVF conference on computer vision and pattern recognition}, pp. \bibinfo{pages}{2185--2194}.
\bibitem[{Zhou et~al.(2023)Zhou, Yang, Qiao and Xiang}]{zhou2023mixstyle}
\bibinfo{author}{Zhou, K.}, \bibinfo{author}{Yang, Y.}, \bibinfo{author}{Qiao, Y.}, \bibinfo{author}{Xiang, T.}, \bibinfo{year}{2023}.
\newblock \bibinfo{title}{Mixstyle neural networks for domain generalization and adaptation}.
\newblock \bibinfo{journal}{International Journal of Computer Vision} , \bibinfo{pages}{1--15}.
\bibitem[{Zhou et~al.(2017)Zhou, Han, Morariu and Davis}]{twostream}
\bibinfo{author}{Zhou, P.}, \bibinfo{author}{Han, X.}, \bibinfo{author}{Morariu, V.I.}, \bibinfo{author}{Davis, L.S.}, \bibinfo{year}{2017}.
\newblock \bibinfo{title}{Two-stream neural networks for tampered face detection}, in: \bibinfo{booktitle}{2017 IEEE Conference on Computer Vision and Pattern Recognition Workshops (CVPRW)}.
\newblock \URLprefix \url{http://dx.doi.org/10.1109/cvprw.2017.229}, \DOIprefix\doi{10.1109/cvprw.2017.229}.
\bibitem[{Zi et~al.(2020)Zi, Chang, Chen, Ma and Jiang}]{WildDeepfake}
\bibinfo{author}{Zi, B.}, \bibinfo{author}{Chang, M.}, \bibinfo{author}{Chen, J.}, \bibinfo{author}{Ma, X.}, \bibinfo{author}{Jiang, Y.G.}, \bibinfo{year}{2020}.
\newblock \bibinfo{title}{Wilddeepfake: A challenging real-world dataset for deepfake detection}, in: \bibinfo{booktitle}{Proceedings of the 28th ACM International Conference on Multimedia}.
\newblock \URLprefix \url{http://dx.doi.org/10.1145/3394171.3413769}, \DOIprefix\doi{10.1145/3394171.3413769}.

\end{thebibliography}

\newpage

\end{document}